\pgfplotsset{compat=1.18}
\def\BibTeX{{\rm B\kern-.05em{\sc i\kern-.025em b}\kern-.08em
    T\kern-.1667em\lower.7ex\hbox{E}\kern-.125emX}}
\newtheorem{theorem}{Theorem}
\newtheorem{lemma}{Lemma}
\newtheorem{definition}{Definition}
\begin{document}

\twocolumn[

\aistatstitle{Deceptive Exploration in Multi-armed Bandits}

\aistatsauthor{ I. Arda Vurankaya$^{1}$ \And Mustafa O. Karabag$^{1}$ \And  Wesley A. Suttle$^{2}$ }
 \aistatsauthor{Jesse Milzman$^{2}$ \And David Fridovich-Keil$^{1}$ \And Ufuk Topcu$^{1}$}

\aistatsaddress{ $^{1}$The University of Texas at Austin\And $^{2}$ The U.S. Army Research Laboratory } ]

\begin{abstract}

We consider a multi-armed bandit setting in which each arm has a public and a private reward distribution. An observer expects an agent to follow Thompson Sampling according to the public rewards, however, the deceptive agent aims to quickly identify the best private arm without being noticed. The observer can observe the public rewards and the pulled arms, but not the private rewards. The agent, on the other hand, observes both the public and private rewards. We formalize detectability as a stepwise Kullback-Leibler (KL) divergence constraint between the actual pull probabilities used by the agent and the anticipated pull probabilities by the observer. We model successful pulling of public suboptimal arms as a %
Bernoulli process where the success probability decreases with each successful pull, and show these pulls can happen at most at a $\Theta(\sqrt{T}) $ rate under the KL constraint. We then formulate a maximin problem based on public and private means, whose solution characterizes the optimal error exponent for best private arm identification. We finally propose an algorithm inspired by top-two algorithms. This algorithm naturally adapts its exploration according to the hardness of pulling arms based on the public suboptimality gaps. We provide numerical examples illustrating the $\Theta(\sqrt{T}) $ rate and the behavior of the proposed algorithm. 

\end{abstract}

\section{Introduction}

Agents acting in adversarial environments may need to conceal their intentions from other parties using deception. Deceptive strategies have proven useful in various fields such as cybersecurity \citep{cyber} and robotics \citep{robot}, where attackers and defenders may try to hide their true intentions.

We study a scenario where an observer expects an agent to interact with a multi-armed bandit using a reference learning algorithm. Each arm is associated with a public and a private reward distribution. When an arm is pulled, two reward samples, one from the public distribution and one from the private, are generated. The public reward samples are observable to both the agent and the observer, and the private reward samples are observable only to the agent. At each time step, the observer can observe the arm pulled by the agent and the corresponding public reward sample, and can use its observations to decide whether the agent follows the reference algorithm. We consider the case where the reference algorithm is Thompson Sampling \citep{thompson1933likelihood}, meaning that the agent is supposed to learn to optimize public rewards by following Thompson Sampling. Its true intention, however, is to identify the best private arm with high confidence as quickly as possible, which might require deviating from the reference algorithm. We refer to such deviations aiming to identify the best private arm as \textit{deceptive exploration}, that is, performing exploration while looking as if optimizing public rewards to outside parties. 

We consider Thompson Sampling as the reference algorithm because its randomized arm selection allows the agent to manipulate arm pull probabilities to deceptively pursue its own goal. Still, the manipulations should be small so that the behavior of the agent remains plausible to the observer. To formalize plausibility, we enforce a stepwise constraint on the Kullback-Leibler (KL) divergence between the agent's and the reference algorithm's arm selection distributions.

As Thompson Sampling aims to maximize the cumulative reward received over time, its action selection becomes almost purely exploitative while it identifies the best public arm: it explores other arms at an $O(\log T) $ rate, which %
is suboptimal for the agent's exploration goal. Due to posterior concentration, the probability assigned to a suboptimal arm vanishes exponentially fast with each additional pull. Nevertheless, Thompson Sampling cannot assign zero probability to an arm after a finite number of measurements, meaning that the agent can always boost the probability of pulling an arm using its KL budget. By analyzing how the probability assigned to a suboptimal arm changes if the agent repeatedly boosts that probability, we conclude that the agent can keep exploring suboptimal arms at a $\sqrt{T} $ rate as $T \rightarrow \infty. $

Building on this rate result, we explain how to perform deceptive exploration to minimize the time required to identify the best private arm with high confidence, and in particular, how the agent's problem differs from the classical best arm identification problem. The key difference is the asymmetry across arms: some arms are easier to pull than others, which is determined by their distance to the best public arm. We then discuss why, even with this asymmetry, an information balance condition must hold for asymptotic optimality, and how an optimal algorithm would adapt to this asymmetry. We formulate a maximin problem whose solution characterizes the best possible error exponent for deceptive exploration. Finally, we highlight how a simple top-two sampling algorithm with information directed selection  enables quick exploration. 

\textbf{Related Work.} There is an extensive literature on best arm identification \citep{toptwo, track, qin2017improving, shang2020fixed, you2023information}. \citet{toptwo} studies the problem in a Bayesian setting, introduces top-two algorithms, and demonstrates their optimality in terms of the rate of posterior convergence. In a frequentist setting, \citet{track} characterize lower bounds on the sample complexity of the fixed-confidence best arm identification, and introduces the asymptotically optimal Track-and-Stop strategy. \citet{qin2017improving,shang2020fixed} provide optimality guarantees for top-two algorithms in the frequentist fixed-confidence setting. To select between the top-two candidates, \citet{you2023information} introduce information-directed selection, and proves the asymptotic optimality of the resulting algorithm for Gaussian bandits. Similar to these works, we propose a top-two algorithm for deceptive exploration, and in a Bayesian setting, we characterize the best possible error exponent for private best arm identification as a solution to a maximin problem. The key difference between these works and our work is that we constrain the learner's action selection so that its behavior stays plausible to the observer. 

Best arm identification with constraints is studied in different settings, where the learner must satisfy a safety constraint \citep{safe_best}, identify the best arm with limited resources \citep{li2024best}, and identify the best arm covertly \citep{chang2022covert}. The closest to our setting is \citep{chang2022covert}, which formulates the \textit{covert} best arm identification problem, in which an agent wants to hide from an adversary that it is pulling arms while identifying the best arm. Different from our setting, in \citep{chang2022covert}, the adversary only observes generated reward samples, but not the agent's actions. 

We propose a deceptive exploration algorithm by building a connection with best arm identification with costs \citep{kanarioscost, Qin2024OptimizingAE, harding2025balancing}. \citet{kanarioscost} introduce the \textit{Cost Aware Best Arm Identification} problem, in which each arm has an associated stochastic pulling cost, and the goal is to minimize the expected cost required to reach a given confidence level. Similarly, \citet{Qin2024OptimizingAE} study the multi-objective task of simultaneously minimizing regret and identifying the best arm. In our work, successfully pulling an arm takes a random number of time steps, hence each arm can be seen as having a time cost determined by its public suboptimality gap. The difference from the mentioned works is that the time costs in this work are history dependent. 

The considered problem is also related to the problem of privacy in bandits and reinforcement learning \citep{diff_priv,azize2024DifferentialPrivateBAI, deceptive_rl}. \citet{diff_priv} use differential privacy to formalize the notion of privacy in bandits, and design regret minimization algorithms with privacy guarantees. Similarly, \citet{azize2024DifferentialPrivateBAI} study differentially private best arm identification. More related to the scenario of this paper, \citet{deceptive_rl} study the problem of keeping the reward function private from adversaries by being deceptive when learning using a reinforcement learning algorithm. While these works try to keep observed rewards or the reward function private from outside parties, we instead try to deceive outside parties into believing that we are following a reference algorithm.

The use of KL divergence to formalize deceptiveness is common in the literature. \citet{BAI2017251, deception_control, id_con} use KL divergence to formalize deception in control of dynamical systems. 
Different from these works, we study the problem  of being deceptive while learning. \citet{id_con} also study learning a performant policy while being deceptive. In contrast to our setting, where the agent performs active learning, the deceptive agent in \citep{id_con} learns from an offline dataset.

\section{Preliminaries}

\textbf{Notation.} For a positive integer $K $, we denote by $[K] $ the set $\{1,2,...,K\} $. For an event $E $, $\mathbbm{1}_{E} $ denotes its indicator. $\mathcal{S}_K $ denotes the $K $-dimensional probability simplex. We use the standard asymptotic notation $O(\cdot), o(\cdot) $ and $\Theta(\cdot) $ to describe growth rates.

For a set $\mathcal{X}$, $\mathcal{P}(\mathcal{X}) $ denotes the set of all probability measures on $\mathcal{X} $. If $\mathbb{P} $, $\mathbb{Q} \in \mathcal{P}(\mathcal{X}) $, the Kullback-Leibler (KL) divergence between them is defined as $d_{KL}(\mathbb{P},\mathbb{Q}) = \int_{\mathcal{X}} \log(\frac{d\mathbb{P}}{d\mathbb{Q}}) d\mathbb{P}$, where $\frac{d\mathbb{P}}{d\mathbb{Q}} $ is the Radon-Nikodym derivative of $\mathbb{P} $ with respect to $\mathbb{Q} $.
We denote by $Ber(p) $ the Bernoulli distribution with parameter $p $, and by $\mathcal{N}(\mu, \sigma^2) $ the Gaussian distribution with mean $\mu $ and variance $\sigma^2$. The KL divergence between distributions $\mathcal{N}(\mu_1, \sigma^2) $ and $\mathcal{N}(\mu_2, \sigma^2) $ is $\frac{(\mu_1-\mu_2)^2}{2\sigma^2} $.

\subsection{Multiarmed Bandits}

In a multi-armed bandit problem, an agent is presented with $K$ arms, each associated with an \textit{unknown} reward distribution $\{\nu_a\}_{a=1}^K $. At time $t$, the agent chooses an arm $A_t  \in  [K]$, and receives a reward $X_t \sim \nu_{A_t} $. Letting $\mu^*= \max_{a \in [K]}\mathbb{E}_{X \sim \nu_a}[X] $, typically the goal is to minimize the regret $\mathbb{E}[\sum_{t=1}^T (\mu^* - X_t)] $ in $T $ steps.

We restrict attention to Gaussian rewards with a common, known variance $\sigma^2 $. A bandit instance can then be represented as a vector of arms' mean rewards: $ \boldsymbol{\mu} = \{\mu_a\}_{a=1}^K  $,  %
Our results can be extended to other one-dimensional exponential family distributions using corresponding upper and lower bounds on the tail. 

We denote by $\mathcal{H}_t = \{ (A_1,X_1),...,(A_t, X_t) \} $ the history up to time $t$, with $\mathcal{H}_0 = \emptyset $. An (anytime) bandit algorithm $\pi = \{\pi_t\}_{t\geq0} $ is a sequence of decision rules, where each $\pi_t: \mathcal{H}_t \rightarrow \mathcal{S}_K $ maps the history to a distribution over arms. $N_{a,T} $ denotes the (random) number of times the arm $a $ has been pulled before time $T $, that is $N_{a,T} = \sum_{t=1}^{T-1} \mathbbm{1}_{A_{t}=a}  $. We denote by $\hat \mu_{a,T} $ the empirical mean of arm $a $ based on observations before time $T $: $\hat \mu_{a,T} = \frac{1}{N_{a,T}}\sum_{t=1}^{T-1} \mathbbm{1}_{A_{t}=a} X_t $. 

\subsection{Thompson Sampling} \label{subsec: TS}

Thompson Sampling is a Bayesian algorithm for regret minimization.
Let $\mathcal{U} \subset \mathbb{R}^K$ denote the set of possible values of $\boldsymbol{\mu} $.
Thompson Sampling starts with a prior $\Pi_0$ over $\mathcal{U}$ and, after observing $(A_t,X_t)$ at time $t$,
updates its belief via Bayes’ rule:
\begin{equation}
  \rho_t(\boldsymbol{\mu})
  \;=\;
  \frac{\rho_{t-1}(\boldsymbol{\mu})\, p\!\left(X_t\mid \boldsymbol{\mu}, A_t\right)}
       {\int_\mathcal{U} \rho_{t-1}(\boldsymbol{\mu}')\, p\!\left(X_t\mid \boldsymbol{\mu}', A_t\right)\,d\boldsymbol{\mu}'}\,,
\end{equation}
where $\rho_t$ is the density of the posterior $\Pi_t$ and
$p(X_t\mid \boldsymbol{\mu}, A_t)$ is the probability of observing $X_t $ upon pulling arm $A_t $ if the true parameter is $\boldsymbol{\mu} $.

At each time step, Thompson Sampling draws $\hat{\boldsymbol{\mu}_t}  \sim \Pi_{t-1}$ and selects $   A_t \in \arg\max_{a\in [K]} \mathbb{E}[X_a \mid \hat{\boldsymbol{\mu}}]$,
then observes $X_t$ and updates $\Pi_{t-1} $ to $\Pi_t$ as above. Equivalently, Thompson Sampling pulls an arm $a $ with probability that $a $ is optimal under the posterior $\Pi_{t-1} $. 

In this paper, we will be interested in the case where the experiments start with an improper prior over the mean of each arm. At time $t $, the posterior over the mean of arm $a $ will be given as $\mathcal{N}(\hat \mu_{a,t}, \sigma^2/N_{a,t}) $.
Letting $Z_{a,t} $ be a  sample from this posterior, the probability of pulling arm $a$ at time $t $ is $\mathbb{P}(Z_{a,t} = \max_{i}Z_{i,t}) $.

\subsection{Best Arm Identification} \label{subsec: BAI}

Best arm identification is an instance of \textit{pure exploration} problems, and the objective is to identify the arm with the largest expected reward, that is $a^* = \arg \max_{a \in [K]} \mu_a $. This setting is related to, but different from the regret minimization framework, as we do not care about the cumulative reward received over time, but only about the quality of the final decision. 

A key quantity of interest in best arm identification is the posterior probability assigned to the event that an arm $a\neq a^* $ is optimal. Formally, let $\Pi_t $ denote the posterior at time $t $. The probability that arm $a $ is optimal under posterior $\Pi_t $ is $p_{a,t} = \mathbb{P}_{\boldsymbol{\mu}_t \sim \Pi_t}(\mu_{a,t} > \max_{i \neq a} \mu_{i,t}) $. We wish that as the number of observations grows, the quantity $p_{a^*, t} $ approaches 1.

When the rewards are Gaussian, given that each arm is pulled infinitely many times, it can be shown that the error probability (confidence level) is 
\begin{equation*}
    1-p_{a^*, t} = \exp\left(-t \min_{a \neq a^*} \frac{w_{a^*,t}w_{a,t}(\mu_{a^*}-\mu_a)^2}{2\sigma^2(w_{{a^*},t}+w_{a,t})}+o(t)\right),
\end{equation*}
where $w_{a,t} = N_{a,t}/t $ \citep{qin2017improving}. It follows that to get the best possible error exponent, the long term sampling proportions should converge to 
\begin{equation} \label{eq: max_min_gaussian}
    w^* \in \arg\max_{w \in \mathcal{S}_K} \min_{a \neq a^*} \frac{w_1w_{a^*}(\mu_{a^*}-\mu_a)^2}{2\sigma^2(w_{a^*}+w_a)}.
\end{equation}
We remark that \eqref{eq: max_min_gaussian} is a concave maximization problem and has a unique optimal solution $w^* $, and $w^*_a > 0 $ for all $a \in [K] $ \citep{track}.

\section{Problem Statement} \label{sec: PS}

We consider a $K$-armed bandit setting in which an agent is assigned by an observer to interact with the bandit using a reference algorithm $\pi_{\text{ref}} $. In the rest of the paper, $\pi_{\text{ref}} $ denotes Thompson Sampling. 
In the considered setting, pulling an arm $a$ produces two independent reward samples: a \textit{public} reward $X_t^{\text{pub}} \sim \mathcal{N}(\mu_a^{\text{pub}}, \sigma^2) $ observable to both the agent and the observer, and a \textit{private} reward $X_t^{\text{priv}} \sim \mathcal{N}(\mu_a^{\text{priv}}, \sigma^2) $ observable only to the agent. Formally, the complete history $\mathcal{H} _t= \{(A_1, X_1^{\text{pub}},X_1^{\text{priv}}),... ,(A_t,X_t^{\text{pub}},X_t^{\text{priv}})\}$ is only observable to the agent, while the observer can only observe the public portion, i.e.,
$\mathcal{H}_t^{\text{pub}} = \{(A_1,X_1^{\text{pub}}),...,(A_t,X_t^{\text{pub}})\} $, and we have $\pi_{\text{ref}}: \mathcal{H}_t^{\text{pub}} \rightarrow \mathcal{S}_K $. $\Pi^{\text{pub}}_t $ and 
$\Pi^{\text{priv}}_t $ denote the posteriors over the public and private means at time $t $, respectively. 

The observer can monitor the public rewards and the agent’s actions to decide whether the agent follows $\pi_{\text{ref}}$. In contrast, the agent’s goal is to identify the best private arm as quickly as possible with high confidence, which can require deviating from $\pi_{\text{ref}} $. The agent must keep its deviations from $\pi_{\text{ref}} $ small to ensure that its behavior remains plausible to the observer. We formalize plausibility by enforcing a stepwise constraint on the KL divergence between the agent's and $\pi_{\text{ref}} $'s arm selection distributions. The use of KL divergence to quantify deceptiveness is common in the literature \citep{BAI2017251}.

Let $a^* $ denote the (unique) best private arm, and let $p^*_{t} $ denote the probability that $a^* $ is optimal under $\Pi_t^{\text{priv}} $.  
We formally state the problem as follows: 

\textbf{Problem}. Consider a $K$-armed Gaussian bandit with public means $\{\mu_a^{\text{pub}}\}_{a=1}^K $ and private means $\{\mu_a^{\text{priv}}\}_{a=1}^K $, and known variance $\sigma^2 $. Design an exploration algorithm $\pi:\mathcal{H}_t \rightarrow\mathcal{S}_K$ that maximizes the asymptotic decay rate of $1- p^*_{t} $ subject to a per-step KL constraint: for all $t> 0 $ and any history $\mathcal{H}_t $, $d_{KL}(\pi (\mathcal{H}_t), \pi_{\text{ref}}(\mathcal{H}_t^{pub})) \leq \epsilon$.

As Thompson Sampling needs to explore all arms to minimize the regret, 
it is natural to ask whether the exploration performed by $\pi_{\text{ref}} $ suffices for best private arm identification. The answer is no for two reasons. First, it is well known in the bandit literature \citep{pure_explore} that regret minimization algorithms are suboptimal for pure exploration, especially for reaching high confidence levels. The reason is that regret minimization algorithms try to optimize cumulative reward, and hence they switch to an exploitation phase in which they pull the empirically best arm almost exclusively. Concretely, Thompson Sampling would explore suboptimal arms at an $O(\log T) $ rate, %
whereas classical asymptotically optimal best arm identification would require pulling an arm $a $ roughly $w_a^*T $ times in $T$ steps, where $w^* = \{w_a\}_{a=1}^K $ is an instance-dependent allocation vector (see Section \ref{subsec: BAI}).
Second, in our setting, this issue is amplified by the potential misalignment between public and private rewards, so $\pi_{\text{ref}} $'s exploration about public rewards may provide little useful information about the best private arm. In that case, it may not be possible to reach even moderate confidence levels by just following $\pi_{\text{ref}}$. For example, consider a case where the first and the second best private arms are very suboptimal for public rewards. In that case, $\pi_{\text{ref}}$ could establish their suboptimality with very few samples, and would pull them very infrequently after that point (at the $O(\log T) $ rate), making it very hard for the agent to decide between these two arms even with a moderate confidence level.

We now explain why these necessary deviations from $\pi_{\text{ref}} $ are also feasible. %
$\pi_{ref} $ uses a randomized arm selection rule, where the randomization comes from the posterior sampling step (see Section \ref{subsec: TS}), so $\pi_{\text{ref}} $ assigns strictly positive probability to every arm whenever the public posterior retains uncertainty. In other words, no suboptimal arm can be completely ruled out with a finite number of pulls---its selection probability can decay but remains nonzero. By choosing at each time $t$ an action distribution within a KL ball of $\pi_{\mathrm{ref}}(\mathcal{H}_t^{\text{pub}})$, the agent can increase these probabilities to obtain additional exploratory samples while keeping its behavior plausible to the observer.

\textbf{Remark:} \textit{We take Thompson Sampling as the reference algorithm due to its randomized nature, which the agent can exploit to perform deceptive exploration. Suppose that $\pi_{\text{ref}} $ is a deterministic algorithm such as UCB \citep{auer2002finite}, so it is not possible to deviate from $\pi_{\text{ref}} $. Still, we can introduce the following model for public reward observations: the agent observes $X_{1,t}^{\text{pub}} \sim \nu_{A_t}^{\text{pub}} $ and the observer observes $X_{2,t}^{\text{pub}} \sim \nu_{A_t}^{\text{pub}} $. Since the observer does not know the agent's observations with certainty, it cannot expect the agent to take deterministic actions, but instead may maintain a belief over the agent's empirical mean observations to predict the agent's next action. From the observer's perspective, $\pi_{\text{ref}} $ would be a randomized algorithm that closely resembles Thompson Sampling.}

\section{Characterizing the Achievable Rate of Pulls Under a KL Constraint} \label{sec: Rate}

As mentioned in Section \ref{sec: PS}, $\pi_{\text{ref}} $ pulls public suboptimal arms at an $O(\log T) $ rate, which is far from optimal for identifying the best private arm. In this section, we answer the following: \textit{ If the agent is allowed to deviate from $\pi_{\text{ref}} $ with a KL budget of $\epsilon $ at each time step, at what rate can it continue to pull public suboptimal arms as $T \rightarrow \infty $? }

In this section, without loss of generality, we assume that arm $1 $ is the best public arm. Also for notational brevity, we denote $\mu_a^{\text{pub}} $ by $\mu_a $. Proofs of all technical results are deferred to the appendix. 

We analyze the behavior of $\pi_{\text{ref}} $ as $T \rightarrow \infty $ on the event that each arm is pulled infinitely often (which is a necessity for asymptotically optimal best arm identification).
At time $t $, the posterior over arm $a $ has the distribution $\mathcal{N}(\hat \mu_{a,t}, \sigma^2/N_{a,t}) $. Let $Z_{a,t} $ denote the sample from this posterior distribution at time $t $: $Z_{a,t} \sim \mathcal{N}(\hat \mu_{a,t}, \sigma^2/N_{a,t}) $. Then $\pi_{\text{ref}} $ assigns probability $\pi_{a,t} =  \mathbb{P}(Z_{a,t} > Z_{i,t} \text{ for all } i \neq a) $ to arm $a $ at time $t $. Since posteriors are well concentrated, it can be shown that $c_t \mathbb{P}(Z_{a,t} > Z_{1,t}) \leq  \pi_{a,t} \leq \mathbb{P}(Z_{a,t} > Z_{1,t}) $ where $c_t \rightarrow 1$. %
Noting that even if the agent deviates, the best public arm (arm 1) is pulled almost always, that is the ratio $N_{1,t} / N_{a,t} $ diverges as $t \rightarrow \infty $, and that $\hat \mu_{i,t} \rightarrow \mu_i$ for all $ i$, we have $\pi_{a,t} = \exp(-\frac{N_{a,t}(\mu_1-\mu_a)^2}{2\sigma^2} +  o(N_{a,t})) $. %

At each time step, the agent can use its KL budget to increase the probability of pulling a suboptimal arm, but there is a caveat: $\pi_{a,t} $ shrinks at an exponential rate in $N_{a,t}$.
The following lemma is central to our analysis, as it shows that a small probability $p $ can be exponentially increased within a KL ball of radius $\epsilon $. 

\begin{lemma}\label{lem:kl_boost}
    Let $\mathbb{P} $ be a probability measure on a set $\mathcal{X} $, $E $ be an event, $p = \mathbb{P}(E) >0$, and $\epsilon > 0 $. Consider
\begin{equation}
    \begin{aligned}
        q^*(p) = \max_{\mathbb{Q} \in \mathcal{P}(\mathcal{X})} \quad &\mathbb{Q}(E) \\
        \text{s.t.} \quad  &d_{KL}(\mathbb{Q},\mathbb{P}) \leq \epsilon.
    \end{aligned} \end{equation} 
    Then,
    \begin{equation}
         \lim_{p \rightarrow 0} \frac{\log(\epsilon/p)}{\epsilon / q^*(p)}  = 1.   
    \end{equation}

\end{lemma}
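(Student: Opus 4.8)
The plan is to first collapse the optimization over $\mathcal{P}(\mathcal{X})$ to a scalar problem. Since the objective depends on $\mathbb{Q}$ only through $q := \mathbb{Q}(E)$, I would coarsen $\mathcal{X}$ into the two cells $E$ and $E^c$ and apply the data-processing inequality for KL divergence under the indicator map $\mathbbm{1}_E$: every feasible $\mathbb{Q}$ satisfies $d_{KL}(\mathbb{Q},\mathbb{P}) \ge d_{KL}(\mathrm{Ber}(q),\mathrm{Ber}(p))$. This lower bound is attained by the two-point reweighting $\frac{d\mathbb{Q}}{d\mathbb{P}} = q/p$ on $E$ and $\frac{d\mathbb{Q}}{d\mathbb{P}} = (1-q)/(1-p)$ on $E^c$, which is a genuine probability measure for $0<p<1$ and achieves $d_{KL}(\mathbb{Q},\mathbb{P}) = d_{KL}(\mathrm{Ber}(q),\mathrm{Ber}(p))$. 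Hence $q^*(p)=\max\{q\in[0,1]: d_{KL}(\mathrm{Ber}(q),\mathrm{Ber}(p))\le\epsilon\}$. Because $q\mapsto d_{KL}(\mathrm{Ber}(q),\mathrm{Ber}(p))$ is convex, vanishes at $q=p$, and strictly increases on $[p,1]$, the maximizer satisfies $q^*>p$ and makes the constraint tight:
\[ q^*\log\frac{q^*}{p} + (1-q^*)\log\frac{1-q^*}{1-p} = \epsilon. \]

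Next I would show $q^*(p)\to 0$ as $p\to 0$. If instead $q^*\ge c>0$ along some sequence $p\to 0$, the first term of the tightness identity would obey $q^*\log(q^*/p)\ge c\log(c/p)\to\infty$ while the remaining term stays bounded, contradicting that the sum equals the fixed constant $\epsilon$. With $q^*\to 0$ in hand, I expand the second term via $\log\frac{1-q^*}{1-p} = -(q^*-p)+O((q^*)^2)$, so that the whole term is $O(q^*)=o(1)$. The tightness identity then gives $q^*\log(q^*/p) = \epsilon + o(1)$.

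Finally I convert $\log(q^*/p)$ into the target $\log(\epsilon/p)$. Writing $\log\frac{\epsilon}{p} = \log\frac{q^*}{p} + \log\frac{\epsilon}{q^*}$ and multiplying by $q^*$ yields
\[ q^*\log\frac{\epsilon}{p} = q^*\log\frac{q^*}{p} + q^*\log\epsilon - q^*\log q^*. \]
Since $q^*\to 0$ forces $q^*\log\epsilon\to 0$ and the standard limit $q^*\log q^*\to 0$, the last two terms vanish and $q^*\log(\epsilon/p)\to\epsilon$. Dividing by $\epsilon$ and using the algebraic identity $\frac{\log(\epsilon/p)}{\epsilon/q^*}=\frac{q^*\log(\epsilon/p)}{\epsilon}$ gives the claimed limit $1$.

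The step I expect to be the main obstacle is the reduction in the first paragraph: rigorously justifying both the data-processing lower bound and the achievability of the reweighted measure, and confirming that the constraint is active at the optimum (including the edge behavior as $p\to 0$). The asymptotic bookkeeping in the last two paragraphs is routine once the cancellation $q^*\log q^*\to 0$ is invoked, but some care is needed to ensure the $o(1)$ remainders depend only on $p\to 0$ and are not masking uncontrolled behavior of $q^*$.
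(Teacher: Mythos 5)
Your proof is correct. The reduction to the scalar Bernoulli problem is essentially the paper's: the paper decomposes $d_{KL}(\mathbb{Q},\mathbb{P})$ via the chain rule over $\{E,E^c\}$ and observes the optimal $\mathbb{Q}$ must match $\mathbb{P}$'s conditional laws, which is the same content as your data-processing lower bound plus the two-point reweighting that achieves it. Where you genuinely diverge is the asymptotic step. The paper never uses activity of the constraint; instead it guesses two explicit candidates, $q_0=\epsilon/L$ and $q_a=\epsilon/(L-a\log L)$ with $L=\log(\epsilon/p)$ and $a>1$, and sandwiches $q^*$ between them by verifying feasibility of the first and infeasibility of the second using two-sided polynomial bounds on $d_{KL}(Ber(q),Ber(p))$. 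You instead exploit that the constraint is tight at the optimum, establish $q^*\to 0$, deduce $q^*\log(q^*/p)=\epsilon+o(1)$, and convert $\log(q^*/p)$ into $\log(\epsilon/p)$ via the cancellation $q^*\log q^*\to 0$. Your route is shorter and avoids having to guess the correct candidate points; the paper's route gives slightly more quantitative information (an explicit second-order window for $q^*$). Two small points to tighten in yours: (i) the constraint is only active once $\log(1/p)\ge\epsilon$, i.e.\ $p\le e^{-\epsilon}$ (otherwise $q^*=1$) --- harmless since $p\to 0$, but worth saying; (ii) the inequality $q^*\log(q^*/p)\ge c\log(c/p)$ in your contradiction step needs the one-line remark that $q\mapsto q\log(q/p)$ is increasing for $q\ge p/e$, which holds on the relevant range once $p<c$.
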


\begin{definition}
    We say that the agent \textit{boosts} arm $a $ at time $t $ if it determines the action distribution by solving
\begin{equation} \label{eq: boost}
    \begin{aligned}
        \max_{\pi} \quad &\pi_a \\
        \text{s.t. } &\pi \in \mathcal{S}_K, d_{KL}(\pi, \pi_{\text{ref}}(\mathcal{H}_t^{pub})) \leq \epsilon, 
    \end{aligned}
\end{equation}
that is, the agent specifically maximizes the probability of pulling arm $a $ using its KL budget. 
\end{definition}

Using Lemma \ref{lem:kl_boost}, we can conclude that if the agent keeps boosting a suboptimal arm $a $, the probability of pulling arm $a $ instead decreases as $\pi_{a,t} \sim \frac{\epsilon}{N_{a,t}(\mu_1-\mu_a)^2/2\sigma^2 + o(N_{a,t})} $. Although the probability of pulling arm $a $ still decreases with each pull, the decrease is harmonic instead of exponential, given that the agent can keep boosting this probability. 

Based on the above discussion, the situation as $T \rightarrow \infty  $ can be modeled as a sequence of Bernoulli trials, where the success probability decreases as a function of the number of prior successes, and remains constant between successes. Here, a success corresponds to the pull of a public suboptimal arm $a $. The following lemma characterizes the growth rate of the number of successes in such a process:

\begin{lemma} \label{lem: bernoulli}
    Consider a sequence of Bernoulli trials in which the probability of success at time $t $ is given by $\frac{c}{M_t + M_0} $, where $M_t $ is the number of successes achieved before time $t $, and $0 < c\leq M_0$ . Then,
    \begin{equation}
        \lim_{T \rightarrow \infty } \frac{M_T}{\sqrt{2cT}} = 1 \quad \text{a.s.}.
    \end{equation}
\end{lemma}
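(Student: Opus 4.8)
The plan is to reframe the process in terms of its inter-success hitting times. Let $G_k$ denote the number of trials needed to go from the $k$-th success to the $(k+1)$-th success. While the number of prior successes equals $k$, the success probability stays fixed at $p_k = c/(k+M_0)$, so $G_k$ is geometric with mean $1/p_k = (k+M_0)/c$, and the $G_k$ are mutually independent. (The hypothesis $c \le M_0$ guarantees $p_0 \le 1$, so the process is well defined.) Writing $T_m = \sum_{k=0}^{m-1} G_k$ for the time of the $m$-th success, the counting process $M_T$ is the generalized inverse of $T_m$, characterized by $T_{M_T} \le T < T_{M_T+1}$. I would first establish that $T_m \sim m^2/(2c)$ almost surely, and then invert this relation to obtain the claim for $M_T$.

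For the asymptotics of $T_m$, I would invoke a strong law for independent but non-identically distributed summands. A direct computation gives $\mathbb{E}[T_m] = \sum_{k=0}^{m-1}(k+M_0)/c = m^2/(2c) + O(m)$, and $\mathrm{Var}(G_k) = (1-p_k)/p_k^2 \le (k+M_0)^2/c^2$. Taking the normalizing sequence $b_m = m^2$, Kolmogorov's convergence criterion applies because $\sum_{k\ge1} \mathrm{Var}(G_k)/b_k^2 \le \sum_{k\ge1} (k+M_0)^2/(c^2 k^4) < \infty$. Hence $(T_m - \mathbb{E}[T_m])/m^2 \to 0$ almost surely, and combining this with the mean asymptotics yields $T_m/m^2 \to 1/(2c)$, i.e. $T_m \sim m^2/(2c)$ almost surely.

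It remains to transfer this to $M_T$. First, $M_T \to \infty$ almost surely: if $M_T$ were bounded by some $m$, the success probability would remain fixed at $p_m > 0$ for all subsequent trials, which by Borel--Cantelli produces a further success almost surely, a contradiction. Now fix an outcome in the almost sure event where $T_m \sim m^2/(2c)$ and $M_T \to \infty$. Setting $m = M_T$ in $T_{M_T} \le T < T_{M_T+1}$ and substituting the equivalence $T_m \sim m^2/(2c)$ shows that $T$ is asymptotically squeezed between $M_T^2/(2c)$ and $(M_T+1)^2/(2c)$; since $(M_T+1)^2/M_T^2 \to 1$, this forces $M_T^2/(2cT) \to 1$, hence $M_T/\sqrt{2cT} \to 1$ almost surely. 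I expect the main obstacle to be the almost sure law of large numbers for $T_m$: the summands $G_k$ have variances growing like $k^2$, so the usual i.i.d.\ strong law does not apply, and one must verify Kolmogorov's square-summability criterion (or an equivalent truncation argument) with the quadratic normalization $b_m = m^2$ rather than a linear one. The subsequent inversion is then a routine monotonicity argument.
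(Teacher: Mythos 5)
Your proof is correct, and it shares the paper's overall skeleton---both arguments pass to the inter-success times $T_m=\sum_{k=0}^{m-1}G_k$, a sum of independent geometric random variables, and then invert to recover $M_T$---but the technical engine is genuinely different. The paper never invokes a strong law: it applies a non-asymptotic concentration inequality for sums of independent geometrics \citep{janson2018tail}, transfers the resulting tails to the counting process through the identity $\mathbb{P}(T_h \geq t) = \mathbb{P}(M_t \leq h)$, and then uses Borel--Cantelli on the $e^{-\Theta(\sqrt{t})}$ bounds to obtain matching $\liminf$ and $\limsup$ statements. You instead verify Kolmogorov's variance criterion with the quadratic normalization $b_m=m^2$ (the check $\sum_k \mathrm{Var}(G_k)/b_k^2 = O\bigl(\sum_k k^{-2}\bigr) < \infty$ is exactly right), conclude $T_m/m^2 \to 1/(2c)$ almost surely, and finish with the pathwise renewal inversion $T_{M_T} \le T < T_{M_T+1}$; your argument that $M_T \to \infty$ a.s.\ is also sound. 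Your route is more elementary and self-contained, needing only the classical Kolmogorov SLLN plus a squeeze, with no external concentration result. What the paper's route buys in exchange is quantitative: its proof produces \emph{summable} tail bounds $\mathbb{P}(M_t \le f_{\lambda}(t)) \le e^{-\Theta(\sqrt{t})}$, and it is precisely this summability---not merely the a.s.\ limit---that is later fed into Lemma~\ref{lem: growth} in the proof of Theorem~\ref{thm: pull rate}, where this Bernoulli process serves as a dominating/dominated comparison process. So your proof fully establishes Lemma~\ref{lem: bernoulli} as stated, but to reuse it the way the paper does downstream one would need to supplement it with explicit tail estimates (or a pathwise coupling) to recover those summability hypotheses.
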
 
Combining Lemma \ref{lem: bernoulli} with the discussion of how the probability of pulling a suboptimal arm decreases with each pull, we arrive at the following theorem. %

\begin{theorem} \label{thm: pull rate}
    Let $a $ be suboptimal according to public rewards, and suppose that the agent repeatedly boosts arm $a $ with its KL budget of $\epsilon $, i.e. it solves \eqref{eq: boost} at each time step $t $. If each arm is pulled infinitely often, then
    $$
    \lim_{T \rightarrow \infty} \frac{N_{a,T}}{\sqrt{T}} = \sqrt{\frac{4\epsilon \sigma^2}{(\mu_1 -\mu_a)^2}} \quad \text{a.s.}.
    $$
\end{theorem}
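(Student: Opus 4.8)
The plan is to combine the two preceding lemmas: I will first pin down the precise form of the per-step success probability of the relevant Bernoulli process, and then invoke Lemma~\ref{lem: bernoulli}. I would work on the event that every arm is pulled infinitely often, so that $N_{a,t}\to\infty$ and $\hat\mu_{i,t}\to\mu_i$ almost surely for every $i$; on this event the expansion $\pi_{a,t}=\exp(-N_{a,t}(\mu_1-\mu_a)^2/(2\sigma^2)+o(N_{a,t}))$ derived above holds a.s. When the agent boosts arm $a$ by solving \eqref{eq: boost}, Lemma~\ref{lem:kl_boost}, applied with $\mathbb{P}=\pi_{\mathrm{ref}}(\mathcal{H}_t^{\mathrm{pub}})$, the event $E=\{\text{pull }a\}$, and $p=\pi_{a,t}$, shows that the boosted pull probability $p_t:=q^*(\pi_{a,t})$ satisfies $p_t\sim \epsilon/\log(\epsilon/\pi_{a,t})$ as $\pi_{a,t}\to0$. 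Substituting $\log(\epsilon/\pi_{a,t})=N_{a,t}(\mu_1-\mu_a)^2/(2\sigma^2)+o(N_{a,t})$ gives
\begin{equation*}
p_t=\frac{c}{N_{a,t}}\bigl(1+o(1)\bigr),\qquad c:=\frac{2\sigma^2\epsilon}{(\mu_1-\mu_a)^2},
\end{equation*}
where the $o(1)$ term tends to $0$ a.s. as $N_{a,t}\to\infty$.

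A pull of arm $a$ counts as a success, $N_{a,t}$ is exactly the number of successes before time $t$, and between successive pulls of $a$ the count---and hence $p_t$ up to the $o(1)$ factor---is unchanged. Thus the success process matches the structure of Lemma~\ref{lem: bernoulli} with $M_t=N_{a,t}$ and the stated constant $c$. Were $p_t$ exactly $c/(N_{a,t}+M_0)$, Lemma~\ref{lem: bernoulli} would give $N_{a,T}/\sqrt{2cT}\to1$ a.s.; since $\sqrt{2c}=\sqrt{4\sigma^2\epsilon/(\mu_1-\mu_a)^2}$, this is exactly the claimed limit.

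The main obstacle is that $p_t$ agrees with $c/N_{a,t}$ only up to a random, path-dependent $(1+o(1))$ factor, so Lemma~\ref{lem: bernoulli} does not apply verbatim. I expect the cleanest fix to be to re-run the increment argument underlying that lemma directly: letting $\mathcal{F}_t$ be the history, $\mathbb{E}[N_{a,t+1}^2-N_{a,t}^2\mid\mathcal{F}_t]=(2N_{a,t}+1)p_t\to2c$ whenever $p_t=(c/N_{a,t})(1+o(1))$, so the same martingale/Ces\`aro analysis yields $N_{a,t}^2\sim 2ct$ and hence $N_{a,T}/\sqrt{T}\to\sqrt{2c}$ a.s. Alternatively one can sandwich: for fixed $\delta>0$ there is a.s. a finite time past which $c(1-\delta)/N_{a,t}\le p_t\le c(1+\delta)/N_{a,t}$, and a stochastic-domination coupling against reference processes of the Lemma~\ref{lem: bernoulli} form with constants $c(1\pm\delta)$ (after inflating them slightly to absorb the constant offset $M_0$) pins $N_{a,T}/\sqrt{T}$ between $\sqrt{2c(1-\delta)}$ and $\sqrt{2c(1+\delta)}$, whence the result follows on sending $\delta\downarrow0$. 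In either route the delicate bookkeeping is controlling the perturbation uniformly despite its path dependence and the fact that $p_t$ is only constant between successes up to that same $o(1)$ correction.
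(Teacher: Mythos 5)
Your overall strategy is the one the paper follows: translate the posterior concentration of $\pi_{a,t}$ plus Lemma~\ref{lem:kl_boost} into a boosted success probability of the form $\tfrac{c}{N_{a,t}}(1+o(1))$ with $c=2\sigma^2\epsilon/(\mu_1-\mu_a)^2$, and then handle the fact that Lemma~\ref{lem: bernoulli} does not apply verbatim. Your second proposed fix --- sandwiching $p_t$ between $c(1\pm\delta)/N_{a,t}$ after an a.s.\ finite time and coupling against reference Bernoulli processes --- is precisely what the paper does: it proves a stochastic-domination coupling lemma (its Lemma~\ref{lemma:5}) and a Borel--Cantelli corollary (Lemma~\ref{lem: growth}), establishes explicit upper and lower bounds on $\pi_{a,t}$ valid for all $t$ past a random finite $\tau$ using the Gaussian tail bounds of Lemma~\ref{lem: gauss_tail}, and then sends the slack parameters to zero. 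So the architecture of your argument is sound and matches the paper.

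The genuine gap is that you take the expansion $\pi_{a,t}=\exp(-N_{a,t}(\mu_1-\mu_a)^2/(2\sigma^2)+o(N_{a,t}))$ as already established on the event that all arms are pulled infinitely often. That expansion is only a heuristic in the main text; for the upper bound on the rate it rests on the claim that $N_{1,t}/N_{a,t}\to\infty$ \emph{even while the agent persistently boosts arm $a$}, since the true exponent involves $\tfrac{N_{1,t}N_{a,t}}{N_{1,t}+N_{a,t}}$ and only reduces to $N_{a,t}$ when that ratio diverges. This is not automatic --- a priori, boosting could keep arm $a$'s pull frequency at a constant fraction --- and the paper devotes a separate lemma (its Lemma~\ref{lemma:7}) to ruling this out via a bootstrap contradiction: if $N_{a,t}$ grew linearly, then $\pi_{a,t}$ would decay exponentially in $\Theta(t)$, the boosted probability would be $O(1/t)$, and $N_{a,t}$ could then grow only at rate $\sqrt{t}$, a contradiction. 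Without this step (or an equivalent argument), your upper bound direction is not justified. A secondary, smaller point: the posterior means $\hat\mu_{j,t}$ and the counts $N_{j,t}$ for $j\neq a$ keep changing between successes of arm $a$, so $p_t$ is not constant between successes even up to a deterministic factor; the paper avoids this by deriving bounds on $\pi_{a,t}$ that depend only on $N_{a,t}$ (uniformly over the other fluctuating quantities once the event $E_t$ holds), which is the form your sandwich would also need to take.
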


Theorem \ref{thm: pull rate} shows that the agent can keep pulling a public suboptimal arm at a $\sqrt{T} $ rate by deviating from $\pi_{\text{ref}} $, a major improvement over the usual $\log T $ rate of Thompson Sampling. Moreover, we see that the rate scales linearly with $\sqrt{\epsilon} $, and inversely with the public suboptimality gap of arm $a$. This result aligns with the intuition that deceptive exploration becomes easier with increasing KL budget, and arms with larger public suboptimality gaps are harder to explore.

\section{How to Deceptively Explore}

In this section, we discuss how to perform deceptive exploration for quick best private arm identification. We first discuss the general characteristics that an asymptotically optimal exploration algorithm should have. Then we formulate a maximin problem whose solution characterizes how much boosting effort the agent needs to allocate to each arm to get the best possible error exponent. Finally, we discuss how a simple top-two sampling procedure results in efficient exploration.   

Let arm 1 be the best private arm, and $i^* $ be the best public arm. With a change of notation, we denote by $\Delta_a^{\text{priv}} $ the suboptimality gap of arm $a $ in terms of private rewards: $\Delta_a^{\text{priv}} = \mu_1^{\text{priv}} -\mu_a^{\text{priv}} $. Similarly, we have $\Delta^{\text{pub}}_a  = \mu_{i^*}^{\text{pub}} - \mu_a^{\text{pub}} $, the public suboptimality gap of arm $a$. Finally, for notational brevity, we assume the reward variance $\sigma^2 = 1 $.

As explained in Section \ref{sec: PS}, the agent's goal is to identify the best private arm as quickly as possible while keeping its behavior plausible to the observer. Our setting differs from the classical best arm identification setting in two key ways. First, in our case, due to the KL constraint, pulling a desired arm has an associated success probability, which is determined by $\pi_{\text{ref}} $ and the KL budget $\epsilon $. Consequently, one can only succeed in pulling an arm after a random number of time steps. Second, there is asymmetry across arms: those with smaller public suboptimality gaps are easier to pull, meaning the expected time to obtain an additional pull is shorter. In classical best arm identification, the goal is to minimize the sample complexity to reach a certain confidence level, which is equivalent to minimizing the time. In our case, this equivalence does not necessarily hold, as pulling different arms requires different amounts of time steps on average. 

\textbf{Characteristics of Optimal Deceptive Exploration}. As mentioned in Section \ref{subsec: BAI}, for Gaussian rewards, the leading term in the error exponent is
\begin{equation} 
\label{eq: Exponent}
    {-\min_{a\neq1}\frac{N_{1,t}N_{a,t} (\Delta_a^{\text{priv}})^2}{2(N_{1,t} + N_{a,t})}}.
\end{equation}
The structure of the exponent is crucial to understanding how to perform exploration. The quantity $\frac{N_{1,t}N_{a,t}(\Delta_a^{\text{priv}})^2}{N_{1,t} + N_{a,t}} $ is the \textit{evidence} gathered for arm $a $, that is, a measure of information collected to declare that arm $ 1$ is better than arm $ a$. Since the error exponent is governed by the arm for which we gathered minimum evidence, it follows that one needs to gather an equal amount of evidence for each arm. In other words, if our main source of uncertainty is whether arm 1 is better than arm $a $, we cannot resolve this uncertainty by generating data from some other arm $b \in [K] \setminus\{1,a\} $. Our exploration algorithm should aim to equalize the evidence gathered for each arm. We refer to this property as \textit{information balance}.

Although an exploration algorithm should equalize evidence across arms, note that one can reduce the uncertainty about whether arm 1 is better than arm $a $ either by sampling from arm $1 $ or sampling from arm $a $. We expect that an optimal algorithm should balance the \textit{information gains} of arm 1 and arm $a $ with their hardness before deciding which one to pull. It might be possible to reach higher confidence levels more quickly by partially compensating for the harder arm by generating more samples from the easier arm.

To gain a quantitative understanding of the problem, first note that as $T \rightarrow \infty $, the public optimal arm $i^* $ is pulled almost exclusively, and thus asymptotically, $i^* $ can never become a bottleneck for the agent's private exploration. Consequently, the agent would not need to boost $i^* $ in the long run, and instead would allocate its boosting effort among the public suboptimal arms. Note that this does not mean the agent never boosts $i^* $: initially the identity of the public optimal arm is unknown to the agent, hence the agent might boost it if suggested by the exploration algorithm. 
Let $w_{a,T} $ denote the fraction of times that the agent boosts arm $a $ in $T $ steps as $T \rightarrow \infty $, and let $w_T = (w_{a,T})_{a\neq i^*} \in \mathcal{S}_{K-1} $. Then the effective time that passes for arm $a $ in $T $ steps is $Tw_{a,T} $. Using Theorem \ref{thm: pull rate}, we have $N_{a,T} \approx \sqrt{\frac{4\epsilon Tw_{a,T}}{(\Delta_a^{\text{pub}})^2}} $ when $T $ is large. To retain the characteristics of the best arm identification problem, we assume that the best public and private arms do not coincide, that is $ i^* \neq 1$. The exponent in \eqref{eq: Exponent} can be written as $ -\sqrt{4 \epsilon T} \text{ } \Gamma(w_T) $, where $\Gamma(w)$ is equal to
\begin{equation} \label{eq: Gamma}
     \min\left\{ \min_{a \not\in \{1, i^*\}}   \frac{\sqrt{w_1w_a}(\Delta_a^{\text{priv}})^2}{\sqrt{w_1}\Delta_a^{\text{pub}} + \sqrt{w_a}\Delta_1^{\text{pub}}}, \frac{\sqrt{w_1}(\Delta_{i^*}^{\text{priv}})^2}{\Delta_1^{\text{pub}}}\right\} .
\end{equation}

It follows that to get the best possible error exponent, the long-term boosting proportions should converge to
\begin{equation} \label{eq: max_min_all}
    w^* \in \arg \max_{w \in \mathcal{S}_{K-1}} \Gamma(w).
\end{equation}
We make the following observations about \eqref{eq: max_min_all}. It is a concave maximization problem, and has a unique optimal solution, which we denote by $w^* $. Furthermore, the optimal solution satisfies,  for all $a, a' \in [K] \setminus\{i^*\} $,
\begin{equation*} 
    \frac{\sqrt{w_1^*w_a^*}(\Delta_a^{\text{priv}})^2}{\sqrt{w_1^*}\Delta_a^{\text{pub}} + \sqrt{w_a^*}\Delta_1^{\text{pub}}} = \frac{\sqrt{w_1^*w_{a'}^*}(\Delta_{a'}^{\text{priv}})^2}{\sqrt{w_1^*}\Delta_{a'}^{\text{pub}}+\sqrt{w_{a}^*}\Delta_1^{\text{pub}}}  .
\end{equation*}

This equality formalizes the information balance condition discussed above.
Note that we have $\frac{\sqrt{w_1^*}(\Delta_{i^*}^{\text{priv}})^2}{\Delta_1^{\text{pub}}}\geq\tfrac{\sqrt{w_1^*w_a^*}(\Delta_a^{\text{priv}})^2}{\sqrt{w_1^*}\Delta_a^{\text{pub}} + \sqrt{w_a^*}\Delta_1^{\text{pub}}} $ at the optimum. Please refer to the appendix for analysis of \eqref{eq: max_min_all}. 

\textbf{Deceptive Exploration with Top-two Sampling.} Since the long-term boosting proportions should converge to $w^* $, one can use a tracking-based approach \citep{track} to design an exploration algorithm. In more detail, at each time step, we can form an estimate $\hat{w} $ of $w^* $ by solving \eqref{eq: max_min_all} using the empirical estimates of the quantities $\Delta_a^{\text{priv}} $ and $\Delta_a^{\text{pub}} $, and then decide which arm to boost to make our current boosting proportions closer to $\hat{w} $. For computational efficiency, we instead use the much simpler but still asymptotically optimal top-two approach.

\begin{algorithm}[t]
\caption{Deceptive Exploration with Top-Two Sampling}
\label{alg:ttts-bai}
\small
\begin{algorithmic}[1]
\Require Number of arms $K$, public means $\boldsymbol{\mu}^{\text{pub}} $ and private means $\boldsymbol{\mu}^{\text{priv}} $, KL budget $\epsilon $,  confidence $\delta \in (0,1)$
\Ensure Recommendation $\hat{a} \in \{1,\dots,K\}$
\State Initialize public and private priors, $\Pi_0^{\text{pub}} $, and $\Pi_0^{\text{priv}} $  for each arm $k$
\For{$t = 0,1,\dots$}
  \State \textbf{(Leader Draw)} Sample $\theta \sim \Pi_t^{\text{priv}} $ 
  \State $l \gets \arg\max_{k} \theta_k^{}$ \Comment{Leader}
  \State \textbf{(Challenger Draw)} Sample $c \neq l $ with probability proportional to $c $ is optimal under $\Pi_t^{\text{priv}}$\Comment{Challenger}
  \State \textbf{(Top-two choice)} With probability $\frac{N_{c,t}}{N_{c,t}+N_{l,t}} $, set $\hat i_t \gets l$; otherwise set $\hat{i}_t \gets c$ \Comment{Arm to boost}
  \State Compute $\pi_{\text{ref}, t} $ from the posterior $\Pi_t^{\text{pub}} $
  \State Boost arm $\hat{i}_t $ with KL budget $\epsilon $ to get $\pi_{\text{ref},t}' $
  \State Sample $A_t \sim  \pi_{\text{ref},t}' $ and pull arm $A_t $
  \State Observe $(X_t^{\text{pub}}, X_t^{\text{priv}}) $, update $\Pi_t^{\text{pub}}, \Pi_t^{\text{priv}} $

  \If{$\max_a \mathbb{P}_{\theta \sim \Pi_t^{\text{priv}}}(\theta_a =\max_i \theta_i) \geq 1 - \delta $}
     \State \Return $\hat{a} \gets \hat{a}_t$ \Comment{Stop when posterior best-arm prob.\ is high enough}
  \EndIf
\EndFor
\end{algorithmic}
\end{algorithm}

Our proposed exploration algorithm is given in Algorithm \ref{alg:ttts-bai}. At each time step, we first choose two arms: a \textit{leader} and a \textit{challenger}. The leader, denoted by $l $, is determined by first sampling $\theta \sim  \Pi_t^\mathrm{priv} $, and letting $l = \arg \max_k \theta_k $. Then a challenger $c \neq l $ is sampled with probability proportional to $c $  being optimal under $\Pi_t^{\text{priv}} $. Given sufficient initial exploration, with high probability, the leader will coincide with the true best private arm. The probability that the arm $a $ is the challenger will be approximately $\exp\left(-\frac{N_{a,t}N_{l,t}(\mu_l-\mu_i)^2}{N_{a,t}+N_{l,t}}\right) $, meaning that the arms for which we gathered less evidence will have a higher probability of being the challenger. This selection rule naturally seeks information balance. Note that if the reference probabilities are given, Algorithm \ref{alg:ttts-bai} has the same time and space complexity as the standard top-two algorithms, with the addition of a single-parameter KL-constrained maximization problem, which can be efficiently solved with bisection. The probabilities $\pi_{\text{ref}, t} $ can be estimated by repeated sampling or numerical integration. 

Having identified the leader and the challenger, we then need a rule to select between them. To see how, we form a link between our problem and the ``Best Arm Identification with Costs'' problem studied in \citep{Qin2024OptimizingAE}. We take the following alternative view of the exploration process: the agent decides to pull an arm, and then pays a time cost, that is, the (random) time it needs until successfully pulling that arm. Noting that if the probability of successfully pulling an arm $a$ is $p_a $ upon boosting, the expected time cost is roughly $1/p_a $, we arrive at the following selection rule: with probability $\frac{N_{c,t}p_l}{N_{c,t}p_l+N_{l,t}p_c} $ pull the leader, otherwise pull the challenger. This rule balances information gains of the leader and the challenger with their respective time costs. Since in reality, the agent does not pull an arm but rather boosts it, to match these probabilities conditional on successfully gathering an exploratory sample, we arrive at the following boosting rule: with probability $\frac{N_{c,t}}{N_{c,t}+N_{l,t}} $ boost the leader, otherwise boost the challenger. 

Note that the selection rule we propose to decide between the leader and the challenger is the same as the information-directed selection rule proposed in \citep{you2023information}. A natural question is: in which step does Algorithm \ref{alg:ttts-bai} adapt its exploration to the hardness of pulling arms? The answer is the randomization between the leader and the challenger. Boosting probabilities are determined by the respective information gains, and actual pull probabilities assigned to the leader and the challenger are reweighted versions of the information gains by probabilities allowed by $\pi_{ref} $. While the algorithm seeks to gather equal evidence from each arm, the randomization between the leader and the challenger naturally favors easier arms.

\section{Numerical Experiments}
In this section, we present numerical examples demonstrating both the achievable rate of pulls given by Theorem \ref{thm: pull rate} and the behavior of Algorithm \ref{alg:ttts-bai}. All plots are presented with $95 \% $ confidence intervals.

\textbf{Demonstrating the Rate from Theorem \ref{thm: pull rate}.} We consider a setting with $\epsilon = 0.1 $ and public means $\boldsymbol{\mu^{\text{pub}}} = (0.6, 0.3, 0.0, 0.2) $. The agent boosts each suboptimal arm equally many times, that is $w_{a,t} = 1/3 $ for $a = 2,3, 4 $. Theorem \ref{thm: pull rate} predicts that the number of pulls for arm $a$ at time $t$ grows as $\phi_{a,t}:= \sqrt{\frac{4\epsilon w_{a,t}t}{(\mu_1-\mu_a)^2}}$. We compare $\phi_{a,t}$ with the empirical number of pulls from a public suboptimal arm.
\begin{figure}[h]
    \centering    \includegraphics{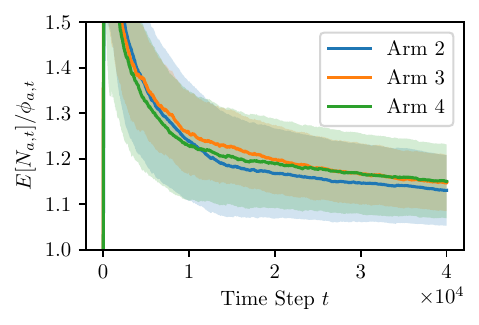}
    \caption{Comparison of the actual rate with the  rate in Theorem \ref{thm: pull rate}. Results are averaged using 50 random seeds. Early values beyond the plotting range are not shown.}
    \label{fig:rate}
\end{figure}

We see that the $\phi_{a,t}$ rate underestimates the actual expected value, but they get closer as time increases. This is expected, since $\phi_{a,t} $ is an asymptotic rate that becomes relevant after posterior concentration happens. During the initial exploration phase, suboptimal arms can be pulled more frequently than this rate.

\textbf{Impact of KL Budget on Confidence.} We investigate how quickly the agent can identify the best private arm by following Algorithm \ref{alg:ttts-bai}, as the KL budget $\epsilon  $ takes the values $\{0,10^{-3}, 10^{-2}, 10^{-1}, 1, \infty \} $. In more detail, in Figure \ref{fig:eps_comparison}, we plot the probability that an arm other than $i^*_{\text{priv}} $ is optimal under the posterior $\Pi_t^{\text{priv}} $. We denote this error probability by $1-p_t^* $. The public and private means are given by $\boldsymbol{\mu^{\text{pub}}} = (0.6, 0.3, 0.0, 0.2)$ and $\boldsymbol{\mu^{\text{priv}}} = (0.2, 0.5, 0.1, 0.0)$, respectively. 

\begin{figure}[h]
\begin{minipage}{\columnwidth}
\includegraphics{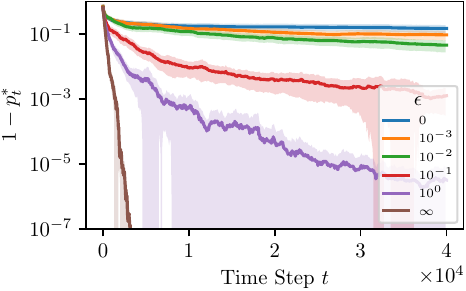}
    \caption{Error probabilities for best arm identification with changing KL budget $\epsilon $. Results are averaged over 100 random seeds.}
    \label{fig:eps_comparison}
\end{minipage}
\end{figure}

We observe that, the agent can reach higher confidence levels faster as the KL budget $\epsilon $ increases. This is expected, as the rate with which the agent can explore public suboptimal arms increases with $\epsilon $. Note that $\epsilon=0 $ corresponds to following $\pi_{\text{ref}} $ without any deviation. For $\epsilon = \infty $, there is no constraint on the agent's behavior, meaning that in this case, we recover the original behavior of the top-two algorithm, and the error probability decreases exponentially fast with time, while the rate is subexponential for the other values.

\textbf{Impact of Asymmetry.} We investigate how the difference in public suboptimality gaps among the arms affects the behavior of Algorithm \ref{alg:ttts-bai}. In more detail, we fix the private means, and consider two cases. In the first case, all public suboptimal arms have the same suboptimality gap. 
In the second case, we change the public means so that each arm has a different public suboptimality gap and arms 3 and 4 become harder to pull than arm 2. In Figure \ref{fig:asym}, we ran this experiment for private means given by $\boldsymbol{\mu}^{\text{priv}}=(0.2, 0.5, 0.0, 0.0)$. The public means in the two cases are given by $\boldsymbol{\mu}_1^{\text{pub}}=(0.6, 0.1, 0.1, 0.1)$ and $\boldsymbol{\mu_2^{\text{pub}}}=(0.6, 0.5, 0.0, 0.3) $, and we have $\epsilon=0.1 $. 

\begin{figure}[h] 
    \centering  \includegraphics[width=0.8\columnwidth]{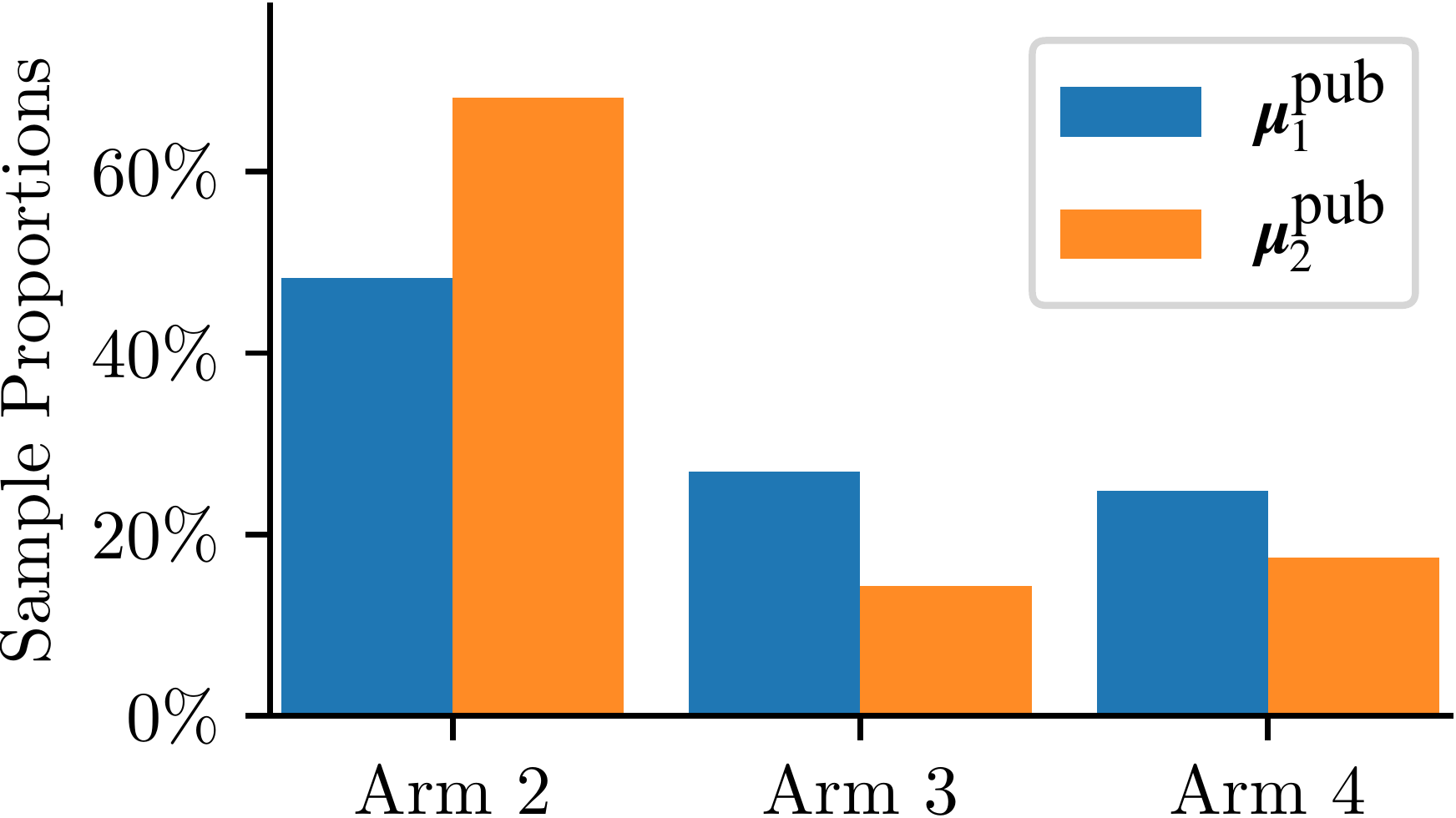}
    \caption{Impact of public suboptimality gaps on the proportion of the total number of samples generated from each arm for the same private means $\boldsymbol{\mu^{\text{priv}}}$. The proportions are averaged over 100 random seeds.}
    \label{fig:asym}
\end{figure}

We see that when pulling arms 3 and 4 becomes harder compared to arm 2, Algorithm \ref{alg:ttts-bai} generates more samples from arm 2, partially compensating for the harder arms. Although arms 3 and 4 share the same private means, arm 3 is pulled less frequently compared to arm 4, as it is harder to pull.

\textbf{Convergence to $\Gamma^* $ under Algorithm \ref{alg:ttts-bai}.} We investigate the convergence of the quantity $\Gamma(w_t) $ defined in Equation \eqref{eq: Gamma} to $\Gamma^* = \max_{w \in \mathcal{S}_{K-1}} \Gamma(w) $ on two bandit instances. Public means are given by $\boldsymbol{\mu}^{\text{pub}}_1 = (0.6, 0.3, 0.1) $ and $\boldsymbol{\mu}^{\text{pub}}_2 = (0.8, 0.3, 0.6) $, while private means are given by $\boldsymbol{\mu}_1^{\text{priv}} = (0.2, 0.5, 0.3) $ and $\boldsymbol{\mu}_2^{\text{priv}} = (0.1, 0.35, 0.2)  $. Figure \ref{fig: gamma} shows the value of $\Gamma^* -\Gamma(w_t) $ for $\epsilon = 0.1 $, which approaches $0$ for both instances.

\begin{figure}[tp]
    \centering    \includegraphics[width=0.9\columnwidth]{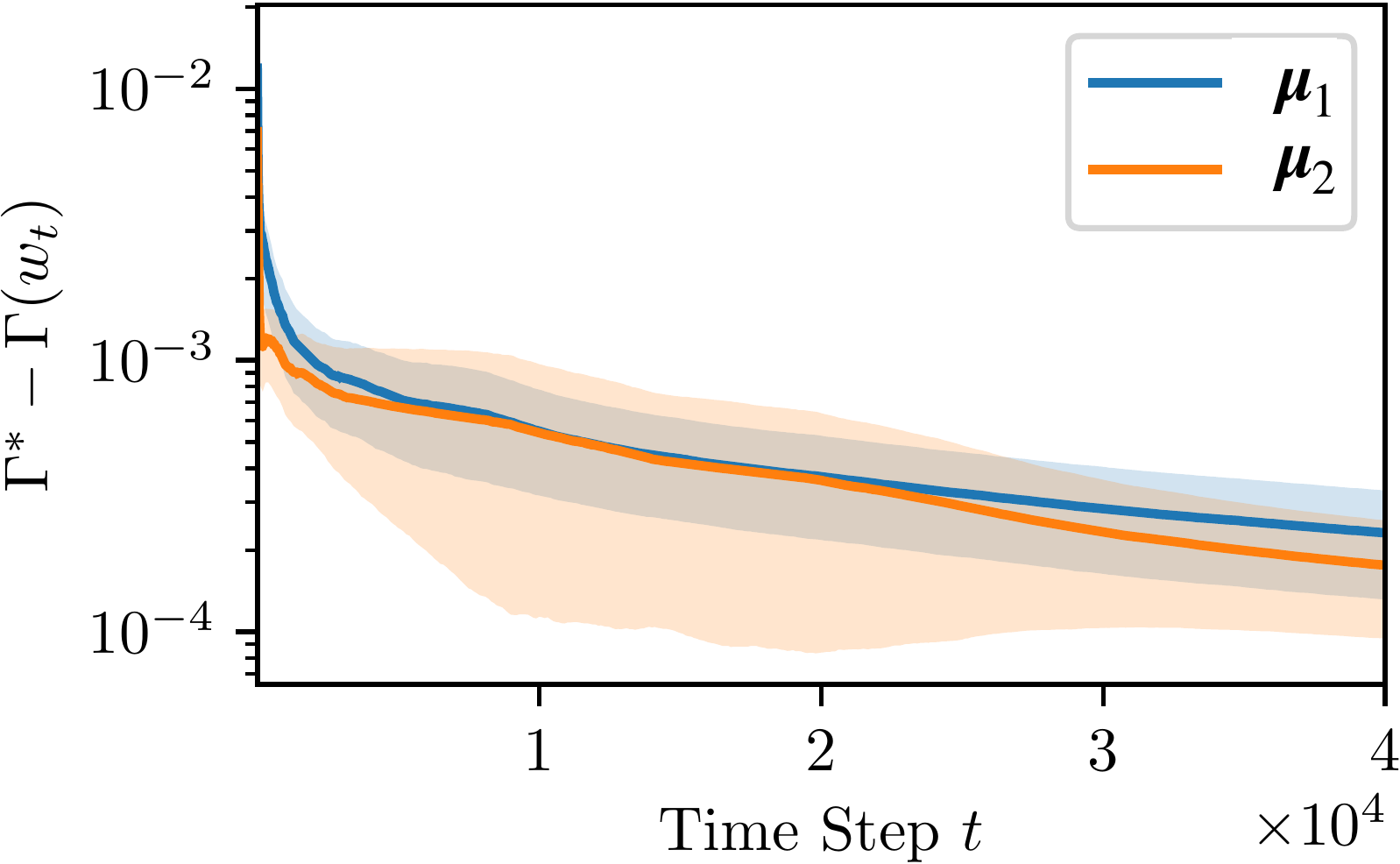}
    \caption{Convergence of $\Gamma(w_t) $ to the optimal exponent $\Gamma^* $ for two different bandit instances. Results are averaged using 50 seeds.}
    \label{fig: gamma}
\end{figure}

\section{Conclusion}

We studied the problem of deceptive exploration in multi-armed bandits. We showed that the agent can keep exploring public suboptimal arms at most at a $\Theta(\sqrt{T}) $ rate when there is a per-step KL constraint on its action selection. We then characterized the best possible error exponent for best private arm identification as a solution to a maximin problem, and proposed an exploration algorithm based on top-two sampling.

Future work involves extending our results to regret minimization or other pure exploration problems. Another possible direction is studying deceptive exploration in scenarios with more complex information structures, for instance, a case in which the agent does not directly observe private rewards, but public and private rewards are correlated. %

\bibliography{ref}

\clearpage
\appendix
\thispagestyle{empty}

\onecolumn
\aistatstitle{Deceptive Exploration in Multi-armed Bandits: \\
Supplementary Materials}

\section{Proofs of Technical Results}

\subsection{Proof of Lemma \ref{lem:kl_boost}}

For notational clarity, we denote the KL divergence between two Bernoulli distributions $d_{KL}(Ber(q), Ber(p)) $ as $d_{KL}(q, p) $:
\begin{equation}
    d_{KL}(q, p) = q\log \frac{q}{p} + (1-q) \log \frac{1-q}{1-p}
\end{equation}

We will show that the maximization problem in Lemma \ref{lem:kl_boost} can be simplified into a single parameter KL-constrained maximization problem.

Let $q = \mathbb{Q}(E) $. Note that we have
\begin{equation}
    d_{KL}(\mathbb{Q, P}) = d_{KL}(\mathbb{Q}(\cdot |E ), \mathbb{P}(\cdot |E ))\mathbb{Q}(E) + d_{KL}(\mathbb{Q}(\cdot |E^c ), \mathbb{P}(\cdot |E^c ))\mathbb{Q}(E^c) + d_{KL}(q, p).
\end{equation}
It follows that the optimal $\mathbb{Q}^* $ must satisfy $\mathbb{Q}^*(\cdot | E) = \mathbb{P}(\cdot |E) $ and  $\mathbb{Q}^*(\cdot | E^c) = \mathbb{P}(\cdot |E^c) $. Thus the problem in Lemma $\ref{lem:kl_boost} $ reduces to 
\begin{align}
    q^*(p) = \max \quad  &q \\
    s.t.  \quad &d_{KL}(Ber(q), Ber(p)) \leq \epsilon.
\end{align}

First, we will show that for sufficiently small $p $, $q^*(p) < 1/2 $. For any $q > 0 $, we have 
\begin{align}
    d_{KL}(q,p) &= q \log \frac{q}{p} + (1-q)\log (1-q) - (1-q) \log (1-p) \\
    &\geq q\log  \frac{q}{p} + (1-q) \log(1-q) \rightarrow \infty \text{ as } p \rightarrow 0
\end{align}
Hence for sufficiently small $p $, we have $q^*(p) < 1/2. $

Note the following inequalities for $x,p \in [0, 1/2]$:
\begin{align}
    &-x -x^2 \leq \log(1-x) \leq -x, \\
    & p \leq -\log(1-p) \leq p/(1-p) \leq 2p
\end{align}

Then for $ p \leq 1/2$ and $q \leq 1/2 $, we have
\begin{equation}
    q\log(q/p) - q-q^2 \leq d_{KL}(q,p) \leq q\log(q/p)-q+q^2+2p
\end{equation}

Define $L = L(p) = \log(\epsilon /p) $. Note that $L \rightarrow \infty  $ as $p \rightarrow 0 $.

The proof will proceed in two steps. First we will show that for sufficiently small $p $, we have $q^*(p) \geq \epsilon/L $. Then, we will show that for sufficiently small $p $, $q^* <  \epsilon/(L-a\log L)$ for $a > 1 $. The result will then follow from these upper and lower bounds. 

We will first show that the point $q_0 = \epsilon/L $ is feasible for sufficiently small $p $. Note that $q_0^2 = o(1/L)$ and $p = \epsilon e^{-L}=o(1/L) $.
\begin{align}
    d_{KL}(q_0, p) \leq q_0L + q_0\log (q_0/\epsilon) - q_0 + q_0^2 + 2p \\ = \epsilon +  \frac{\epsilon}{L} (-\log L -1) + o(1/L) < \epsilon
\end{align}
for $p $ sufficiently small. Hence the point $q_0 $ is feasible, and $q^* \geq q_0 = \epsilon/\log(\epsilon/p) $.

Next, define $q_a = \frac{\epsilon}{L-a\log L}$ for some $a > 1 $. We will show that for sufficiently small $p $, we have $q^* < q_a $. 

Using the lower bound for $d_{KL}(q, p) $, we have:
\begin{align}
    d_{KL}(q_a, p) &\geq q_a(L + \log\frac{q_a}{\epsilon})-q_a-q_a^2 \\
    &= q_a(L-\log(L-a\log L))-q_a + o(1/L)
\end{align}

Noting that $\log(1-x) = -x + O(x^2) $ for small $x $, we have 
\begin{align}
    \log(L-a \log L) &=\log L + \log(1-a \log L /L) \\
    &=\log L -a\log L /L + O((\log L / L)^2).
\end{align}
Since $1/(1-x) = 1 + x +O(x^2) $
\begin{align}
    \frac{L-\log(L-a\log L)}{L-a\log L} &= \frac{L-\log L + a\log L/L + O((\log L /L)^2)}{L(1-a\log L /L))} \\
    &= (1-\frac{\log L}{L})(1 + a \frac{\log L}{L}) + O((1 /L)).
\end{align}
Therefore,
\begin{align}
    d_{KL}(q_a, p) &\geq \epsilon(1+ \frac{(a-1)\log L}{L} + O(1/L)) - \epsilon /L + O(1/L^2) \\
    &= \epsilon + \epsilon (\frac{(a-1)\log L-1}{L})+O(1/L) > \epsilon
\end{align}
when $L $ is large enough, i.e. $p $ is small enough. Hence $q^* < \frac{\epsilon}{L-a\log L} $. 

Since we have $\epsilon /L \leq q^*\leq \epsilon / (L-a\log L) $, the result follows, i.e. $q^*(p) \rightarrow \epsilon/L $ as $p \rightarrow 0 $.

\subsection{Proof of Lemma \ref{lem: bernoulli}}

Let $T_h $ be the time at which the $h^{th} $ success is achieved. We have $T_h $ = $\sum_{i=1}^h X_i  $, where $X_i \sim \text{Geom}(\frac{c}{i-1 +M_0}) $ is the time that passes from the $(i-1)^{th} $ success until $i^{th} $ success, i.e. $X_i  = T_i - T_{i-1} $, with $T_0  = 0 $. Note that $T_h $ is the sum of $h $ independent geometric random variables. We state the following concentration result from \citep{janson2018tail}:

\begin{lemma}[\citep{janson2018tail}] \label{lem: concent}
    Consider $X = \sum_{i=1}^n X_i $, where $X_i \sim Geom(p_i) $,  let $p^* = \min_i p_i $, and $\mu = E[X] $. For any $\lambda \geq 1 $,
    \begin{equation}
       \max\{\ \mathbb{P}(X \geq \lambda \mu), \mathbb{P}(X \leq \mu/\lambda) \} \leq e^{-p^*\mu(\lambda-1-\ln\lambda)}.
    \end{equation}
\end{lemma}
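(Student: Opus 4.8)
The plan is to prove this by the exponential (Chernoff) method applied to the sum of independent geometrics, the key idea being a domination step that collapses the heterogeneous parameters $p_1,\dots,p_n$ down to the single worst-case parameter $p^*$. Throughout I would write $q=p^*$ and let $M_\theta(p)=\mathbb{E}[e^{\theta G}]=\frac{pe^\theta}{1-(1-p)e^\theta}$ denote the moment generating function of a $\mathrm{Geom}(p)$ variable $G$, which is finite for $e^\theta<1/(1-p)$.

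For the upper tail I would start from Markov's inequality applied to $e^{\theta X}$ with $\theta>0$:
\begin{equation}
\mathbb{P}(X\ge\lambda\mu)\le e^{-\theta\lambda\mu}\prod_{i=1}^n M_\theta(p_i)=\exp\!\Big(-\theta\lambda\mu+\sum_{i=1}^n\log M_\theta(p_i)\Big).
\end{equation}
The crucial step is the per-variable domination inequality: for every fixed admissible $\theta$, the map $p\mapsto p\,\log M_\theta(p)$ is nonincreasing, so that $p_i\log M_\theta(p_i)\le q\log M_\theta(q)$ for all $p_i\ge q$. Dividing by $p_i$, summing, and using $\mu=\sum_i 1/p_i$, this yields $\sum_i\log M_\theta(p_i)\le q\,\mu\,\log M_\theta(q)$, hence $\mathbb{P}(X\ge\lambda\mu)\le\exp(\mu\,h(\theta))$ with $h(\theta)=-\theta\lambda+q\log M_\theta(q)$. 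In effect this reduces the heterogeneous problem to $n$ i.i.d.\ copies of the single hardest geometric, which is precisely why $p^*$ is the governing quantity.

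It then remains to optimize the tilt. Setting $h'(\theta)=0$ gives the closed-form minimizer $e^{\theta^*}=\frac{\lambda-q}{\lambda(1-q)}$ (which satisfies $\theta^*\ge0$ precisely because $\lambda\ge1$, and keeps the MGF finite since $q>0$), and substituting back gives the exact rate $h(\theta^*)=(q-\lambda)\log\frac{\lambda-q}{\lambda(1-q)}+q\log\lambda$. The final step is to verify $h(\theta^*)\le-q(\lambda-1-\ln\lambda)$, which I would do by defining $F(x)=(q-x)\log\frac{x-q}{x(1-q)}+q(x-1)$ and checking $F(1)=0$, $F'(1)=0$, and $F''(x)=-q^2/(x^2(x-q))<0$; concavity then forces $F(x)\le0$ for all $x>q$, which is exactly the desired inequality at $x=\lambda$. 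The lower tail is handled by the symmetric argument with a negative tilt $\theta<0$ and threshold $\mu/\lambda$: the same domination step applies and the analogous optimization produces a bound of the same exponential form.

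The main obstacle is the domination inequality $p\log M_\theta(p)\le q\log M_\theta(q)$, i.e.\ the monotonicity of $p\mapsto p\log M_\theta(p)$. This is the only place where heterogeneity of the $p_i$ must be controlled, and I expect it to require a short but careful calculus argument: differentiating $p\log M_\theta(p)$ and showing the derivative is nonpositive throughout the admissible range of $\theta$. By contrast, the Chernoff setup, the explicit minimization over $\theta$, and the concavity check for $F$ are all routine.
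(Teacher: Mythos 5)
Your upper-tail argument is correct and complete, and it is essentially the same route as the source the paper cites for this lemma (the paper gives no proof of its own, deferring entirely to Janson, 2018, whose proof is likewise a Chernoff bound with a reduction to the worst-case parameter $p^*$). Moreover, the domination step you flag as the main obstacle is in fact a one-line computation: writing $M_\theta(p)=\frac{pe^\theta}{1-(1-p)e^\theta}$, a direct differentiation gives $\frac{d}{dp}\bigl[p\log M_\theta(p)\bigr] = 1 + \log M_\theta(p) - M_\theta(p) \le 0$, valid for either sign of $\theta$ on the admissible range $e^\theta < 1/(1-q)$, which covers every $p_i \ge q$. Your optimizer $e^{\theta^*}=\frac{\lambda-q}{\lambda(1-q)}$, the value $h(\theta^*)=(q-\lambda)\log\frac{\lambda-q}{\lambda(1-q)}+q\log\lambda$, and the concavity verification $F(1)=F'(1)=0$, $F''(x)=-q^2/\bigl(x^2(x-q)\bigr)<0$ all check out, so $\mathbb{P}(X\ge\lambda\mu)\le e^{-p^*\mu(\lambda-1-\ln\lambda)}$ is fully established.

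The genuine gap is your final sentence: the lower tail is \emph{not} symmetric and does not ``produce a bound of the same exponential form.'' Run your own machinery with $\theta<0$: the stationary point is $e^{\theta^*}=\frac{1-q\lambda}{1-q}$ (it exists only for $\lambda<1/q$), and the optimized exponent, after the analogous weakening, is $-p^*\mu\bigl(1/\lambda-1+\ln\lambda\bigr)$, not $-p^*\mu(\lambda-1-\ln\lambda)$; for $\lambda\ge1$ one has $1/\lambda-1+\ln\lambda\le\lambda-1-\ln\lambda$, so this is strictly weaker. In fact the lower-tail half of the statement as printed is false, so no proof can close the gap: take $n=1$, $X\sim \mathrm{Geom}(p)$ with $p\to0$ and $\lambda=5$; then $\mathbb{P}(X\le\mu/\lambda)\to 1-e^{-1/5}\approx 0.18$, whereas $e^{-p^*\mu(\lambda-1-\ln\lambda)}=e^{-(4-\ln 5)}\approx 0.09$. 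The statement is a misquotation of Janson, whose lower-tail theorem reads $\mathbb{P}(X\le\lambda\mu)\le e^{-p^*\mu(\lambda-1-\ln\lambda)}$ for $\lambda\le1$; substituting $\lambda\mapsto 1/\lambda$ shows the correct combined bound is $\max\{\mathbb{P}(X\ge\lambda\mu),\mathbb{P}(X\le\mu/\lambda)\}\le e^{-p^*\mu(1/\lambda-1+\ln\lambda)}$. Your Chernoff argument does prove this corrected form, and since $1/\lambda-1+\ln\lambda>0$ for $\lambda>1$, the corrected exponent still yields the $e^{-\Theta(\sqrt{t})}$ summability needed for the Borel--Cantelli step in the paper's proof of Lemma 2, so the downstream results are unaffected.
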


We have
\begin{align}
    E[T_h] &= \sum_{i=1}^h E[X_i] \\
    &= \sum_{i=1}^h \frac{i-1+M_0}{c} \\
    &= \frac{h(h-1)}{2c} + \frac{M_0h}{c}.
\end{align}

Let $\mu_h = E[T_h] $. Using Lemma \ref{lem: concent}, for any $\lambda \geq 1 $,
\begin{equation}
     \max\{\ \mathbb{P}(T_h \geq \lambda \mu_h), \mathbb{P}(T_h \leq \mu_h/\lambda) \} \leq e^{-\frac{h(h-1)}{2(M_0+h-1)}(\lambda-1-\ln \lambda)}.
\end{equation}

Note that for $t \in \mathbb{N} $, we have $\mathbb{P}(T_h \geq t) = \mathbb{P}(M_t \leq h) $ and $\mathbb{P}(T_h \leq t) = \mathbb{P}(M_t \geq h) $. Hence
\begin{equation}
    \mathbb{P}(T_h \geq \lceil \lambda\mu_h \rceil) = \mathbb{P}(M_{\lceil \lambda\mu_h \rceil} \leq h) \leq e^{-\frac{h(h-1)}{2(M_0+h-1)}(\lambda-1-\ln \lambda)}.
\end{equation}
Letting $t = \lceil \lambda \mu_h \rceil$, we have
\begin{equation}
    h \geq \frac{-M_0 + \sqrt{M_0^2 + 8c(t-1)/\lambda}}{2}
\end{equation}
and
\begin{equation}
    \mathbb{P}(M_t \leq \frac{-M_0 + \sqrt{M_0^2+ 8c(t-1)/\lambda}}{2}) \leq e^{-\Theta(\sqrt{t})}
\end{equation}
Letting $f_{\lambda}(t) = \frac{-M_0 + \sqrt{M_0^2 8c(t-1)/\lambda}}{2} $, we have 
\begin{equation}
    \sum_{t=1}^{\infty} \mathbb{P}(M_t \leq f_{\lambda}(t)) <\infty.
\end{equation}
Thus by Borel-Cantelli Lemma, $\mathbb{P}(M_t \leq f_{\lambda}(t) \text{ i.o.}) = 0 $. Since this is true for any $\lambda > 1 $, we have
\begin{equation}
    \liminf_{t\rightarrow \infty} \frac{M_t}{\sqrt{t}} \geq \sqrt{2c} \quad \text{a.s.}.
\end{equation}
Following the same steps for the upper bound yields
\begin{equation}
    \limsup_{t \rightarrow \infty} \frac{M_t}{\sqrt{t}} \leq \sqrt{2c} \quad \text{a.s.} .
\end{equation}
Hence we have shown that
\begin{equation}
    \lim_{t\rightarrow\infty} \frac{M_t}{\sqrt{2ct}} = 1 \quad \text{a.s.} .
\end{equation}

\subsection{Proof of Theorem \ref{thm: pull rate}}

To prove Theorem \ref{thm: pull rate}, we first introduce upper and lower bounds on the tails of the Gaussian distribution.

\begin{lemma}[\citep{wainwright2019high,cook2009upper}] \label{lem: gauss_tail}
    Let  $X \sim \mathcal{N}(0, \sigma^2) $ and $h > 0 $. Then we have
    \begin{equation}
       \frac{\sigma h}{h^2+\sigma^2} e^{-h^2/2\sigma^2} \leq \mathbb{P}(X > h) \leq \frac{\sigma}{h}e^{-h^2/2\sigma^2}.
    \end{equation}
\end{lemma}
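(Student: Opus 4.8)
The plan is to recognize Lemma \ref{lem: gauss_tail} as the classical Mills-ratio bounds on the Gaussian upper tail and to derive both inequalities from a single integration by parts. First I would reduce to the standard normal by the scaling $X = \sigma Z$ with $Z \sim \mathcal{N}(0,1)$: setting $x = h/\sigma > 0$ gives $\mathbb{P}(X > h) = \bar\Phi(x) := \int_x^\infty \varphi(t)\,dt$, where $\varphi(t) = (2\pi)^{-1/2} e^{-t^2/2}$ is the standard normal density, which satisfies $\varphi'(t) = -t\,\varphi(t)$. Undoing the scaling at the end turns the scale-free prefactors $1/x$ and $x/(x^2+1)$ into $\sigma/h$ and $\sigma h/(h^2+\sigma^2)$, and replaces $\varphi(x)$ by $(2\pi)^{-1/2} e^{-h^2/2\sigma^2}$; the normalizer $(2\pi)^{-1/2}$ is precisely the constant carried by these classical bounds.

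The one computation I would perform is the integration by parts
\[
\bar\Phi(x) = \int_x^\infty \frac{1}{t}\bigl(-\varphi'(t)\bigr)\,dt = \frac{\varphi(x)}{x} - \int_x^\infty \frac{\varphi(t)}{t^2}\,dt ,
\]
taking $u = 1/t$, $dv = -\varphi'(t)\,dt$ and using $\varphi(t)/t \to 0$ as $t \to \infty$. The upper bound is then immediate, since the remainder integral is nonnegative: $\bar\Phi(x) \le \varphi(x)/x$, which after rescaling gives $\mathbb{P}(X > h) \le \tfrac{\sigma}{h}(2\pi)^{-1/2} e^{-h^2/2\sigma^2}$.

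For the lower bound — the step I expect to require the most care, because of the rational prefactor — I would bound the remainder in terms of $\bar\Phi$ itself. On $\{t \ge x\}$ we have $t^{-2} \le x^{-2}$, so $\int_x^\infty t^{-2}\varphi(t)\,dt \le x^{-2}\bar\Phi(x)$. Substituting this into the identity above and solving the resulting self-referential inequality for $\bar\Phi(x)$ yields
\[
\bar\Phi(x)\,(1 + x^{-2}) \ge \frac{\varphi(x)}{x} \qquad\Longrightarrow\qquad \bar\Phi(x) \ge \frac{x}{x^2+1}\,\varphi(x),
\]
and rescaling recovers the stated lower bound $\tfrac{\sigma h}{h^2+\sigma^2}(2\pi)^{-1/2}e^{-h^2/2\sigma^2}$.

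The only subtlety worth flagging is the normalizing constant: the inequalities as displayed omit the factor $(2\pi)^{-1/2}$, which merely loosens the upper bound (as $(2\pi)^{-1/2} < 1$) and should be retained in the lower bound for it to hold; in any case only the exponential order $e^{-h^2/2\sigma^2}$ is used in the proof of Theorem \ref{thm: pull rate}, so the constant is immaterial downstream. An alternative to the self-referential step is a monotonicity argument: set $g(x) = \bar\Phi(x) - \tfrac{x}{x^2+1}\varphi(x)$, note $g(\infty)=0$, and check $g' \le 0$, which after clearing the positive denominator $(x^2+1)^2$ reduces to the trivial inequality $2 \ge 0$. The integration-by-parts route is shorter and avoids the explicit differentiation, so I would present that as the main argument and keep the monotonicity computation in reserve.
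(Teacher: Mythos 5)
Your derivation is correct, but note that the paper does not prove Lemma \ref{lem: gauss_tail} at all --- it is imported by citation from \citep{wainwright2019high,cook2009upper} --- so there is no in-paper argument to compare against; your self-contained Mills-ratio proof (integration by parts using $\varphi'(t) = -t\varphi(t)$, dropping the nonnegative remainder for the upper bound, and bounding the remainder by $x^{-2}\bar\Phi(x)$ and solving the self-referential inequality for the lower bound) is the standard route and every step checks out, including the backup monotonicity computation whose numerator indeed collapses to the constant $2$. The most valuable part of your write-up is the flag you raise about the normalizing constant: as displayed, the lemma's lower bound $\frac{\sigma h}{h^2+\sigma^2}e^{-h^2/2\sigma^2}$ is genuinely false without the factor $(2\pi)^{-1/2}$ (at $h=\sigma$ it would assert $\bar\Phi(1)\approx 0.159 \geq \tfrac{1}{2}e^{-1/2}\approx 0.303$), whereas the cited sources state the bounds with the density $\varphi$, i.e.\ with the $(2\pi)^{-1/2}$ retained. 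You are also right that this is immaterial downstream: the proof of Theorem \ref{thm: pull rate} only uses the exponential order $e^{-h^2/2\sigma^2}$ and absorbs constants into $\gamma(c,\beta,\delta)\log N_{a,t}$ and $o(N_{a,t})$ terms, so the paper's conclusions are unaffected, but the lemma statement itself should carry the $(2\pi)^{-1/2}$.
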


We will reason about the growth of $N_{a,t} $ using (asymptotic) upper and lower bounds on $\pi_{a,t} $ of the form $g(N_{a,t}) \leq\pi_{a,t} \leq f(N_{a,t}) $. The following lemma can be used to show that the rate of growth of $N_{a,t} $ almost surely cannot be higher than the case in which $\pi_{a,t} = f(N_{a,t}) $, or lower than the case with $\pi_{a,t} = g(N_{a,t}) $.

\begin{lemma} \label{lemma:5}
    Consider a sequence of Bernoulli random variables $Z_n $, and let $\mathcal{H}_{t}$ denote the $\sigma $-field up to time $t $. Let $M_t = \sum_{n=1}^{t} Z_{t} $. Suppose that for some functions $f: \mathbb{N} \rightarrow (0,1) $ and $g: \mathbb{N} \rightarrow (0,1) $, we have $\mathbb{P}(Z_t = 1|\mathcal{H}_{t-1}) \leq f(M_{t-1}) $ and $\mathbb{P}(Z_t = 1 | \mathcal{H}_{t-1}) \geq g(M_{t-1}) $. 

    Consider two other sequences of Bernoulli random variables $Z_n^{up} $ and $Z_n^{down} $, which satisfy $\mathbb{P}(Z_t^{up}=1|\mathcal{H}_{t-1}) = f(M_{t-1}) $ and $\mathbb{P}(Z_t^{down}| \mathcal{H}_{t-1}) = g(M_{t-1}) $. %

 Then we have
 \begin{equation} 
     \mathbb{P}(M_t \geq h) \leq \mathbb{P}(M_t^{up} \geq h) \text{ and } \mathbb{P}(M_t \geq h) \geq \mathbb{P}(M_t^{down} \geq h).
 \end{equation}
    
\end{lemma}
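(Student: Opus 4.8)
The plan is to prove the stronger statement that the three count processes can be realized on a common probability space so that $M_t^{\mathrm{down}} \le M_t \le M_t^{\mathrm{up}}$ holds for every $t$ almost surely; the two tail inequalities then follow immediately by taking marginals. I will describe the upper comparison $M_t \le M_t^{\mathrm{up}}$ in detail, the lower one being entirely symmetric (with $g$ and $\ge$ in place of $f$ and $\le$). Throughout I read the defining relations as $\mathbb{P}(Z_t^{\mathrm{up}}=1\mid \mathcal{H}_{t-1}) = f(M_{t-1}^{\mathrm{up}})$ and $\mathbb{P}(Z_t^{\mathrm{down}}=1\mid \mathcal{H}_{t-1}) = g(M_{t-1}^{\mathrm{down}})$, i.e.\ each reference process uses $f$ or $g$ evaluated at its \emph{own} running count, so that $M^{\mathrm{up}}$ and $M^{\mathrm{down}}$ are the pure-birth chains to which Lemma~\ref{lem: bernoulli} applies.

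I would build the coupling inductively, maintaining the invariant $M_{t} \le M_t^{\mathrm{up}}$ (all processes start from the same initial count, giving the base case). Given $\mathcal{H}_{t-1}$, let $p_t = \mathbb{P}(Z_t=1\mid \mathcal{H}_{t-1})$ and $q_t = f(M_{t-1}^{\mathrm{up}})$. There are two cases. If $M_{t-1} = M_{t-1}^{\mathrm{up}} = m$, then $q_t = f(m) \ge p_t$ by the hypothesis $p_t \le f(M_{t-1})$, so I use the monotone (quantile) coupling: draw a single $U \sim \mathrm{Unif}(0,1)$ and set $Z_t = \mathbbm{1}_{U \le p_t}$, $Z_t^{\mathrm{up}} = \mathbbm{1}_{U \le q_t}$. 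This forces $Z_t \le Z_t^{\mathrm{up}}$ and hence $M_t \le M_t^{\mathrm{up}}$. If instead $M_{t-1} < M_{t-1}^{\mathrm{up}}$, then because the counts are integer-valued the gap is at least $1$; I may draw $Z_t, Z_t^{\mathrm{up}}$ by any (e.g.\ independent) coupling with the correct marginals, since $M_t - M_t^{\mathrm{up}} = (M_{t-1}-M_{t-1}^{\mathrm{up}}) + (Z_t - Z_t^{\mathrm{up}}) \le -1 + 1 = 0$. Either way the invariant is preserved, so $M_t \le M_t^{\mathrm{up}}$ for all $t$ almost surely, giving $\mathbb{P}(M_t \ge h) \le \mathbb{P}(M_t^{\mathrm{up}} \ge h)$.

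The conceptual obstacle I expect is that $f$ and $g$ are \emph{decreasing} in the count (the success probability shrinks as successes accumulate), so the usual ``larger state $\Rightarrow$ larger success probability'' monotone coupling is unavailable: when the dominating process is strictly ahead, its conditional success probability is actually \emph{smaller}. The resolution above is that the probability comparison $p_t \le q_t$ is only ever \emph{needed} on the diagonal $M_{t-1} = M_{t-1}^{\mathrm{up}}$, where both are $f$ evaluated at the same argument; off the diagonal the integer gap of at least $1$ absorbs the single-step change regardless of the coupling. A secondary technical point is the filtration bookkeeping, namely ensuring that enlarging to the joint filtration does not disturb the prescribed conditional laws of either process; this is handled by taking $p_t$ (an $\mathcal{H}_{t-1}$-measurable quantity) as the target at each step.

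Alternatively, one can avoid the measure-theoretic bookkeeping altogether by inducting directly on the tail functions $F_t(h) = \mathbb{P}(M_t \ge h)$ and $G_t(h) = \mathbb{P}(M_t^{\mathrm{up}} \ge h)$. Conditioning on $\mathcal{H}_{t-1}$ and using $p_t \le f(M_{t-1})$ on the event $\{M_{t-1}=h-1\}$ yields
\begin{equation}
F_t(h) \le \bigl(1-f(h-1)\bigr)F_{t-1}(h) + f(h-1)\,F_{t-1}(h-1),
\end{equation}
with the analogous identity (equality) for $G_t$. Since $f(h-1) \in [0,1]$, both coefficients are nonnegative, so the componentwise induction hypothesis $F_{t-1}(\cdot) \le G_{t-1}(\cdot)$ propagates to $F_t(h) \le G_t(h)$; this is the route I would actually write up, as it is self-contained and sidesteps the explicit coupling construction.
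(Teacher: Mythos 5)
Your proposal is correct, and both of your routes differ from the paper's. The paper also proves the lemma by coupling, but it organizes the coupling around inter-success waiting times rather than a stepwise state comparison: for each success level $h$ it introduces a shared sequence of uniforms $U_{h,1},U_{h,2},\dots$ and defines the original process's waiting time as the first $k$ with $U_{h,k}\le p_{h,k}$ and the dominating process's waiting time as the first $k$ with $U_{h,k}\le f(h-1)$; since the running count equals $h-1$ throughout that block, the hypothesis gives $p_{h,k}\le f(h-1)$, hence $W_h^{\mathrm{up}}\le W_h$ pathwise, so the hitting times satisfy $\hat T_h^{\mathrm{up}}\le \hat T_h$ and the tail inequality follows from $\{M_t\ge h\}=\{T_h\le t\}$. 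This sidesteps exactly the obstacle you flagged (that $f$ is decreasing in the count, so the two processes' success probabilities are not ordered once the counts separate), because within each waiting block the relevant count is frozen; your first sketch resolves the same obstacle differently, by noting that the probability comparison is only needed on the diagonal and that an integer gap of one absorbs any single step. Your second argument, the componentwise induction on the tail functions $F_t$ and $G_t$, is arguably the cleanest of the three: it is coupling-free, avoids the filtration-enlargement and ``same marginal law'' bookkeeping that both the paper's proof and your first sketch leave implicit, and the one-step recursion with nonnegative coefficients is immediate to verify. You also correctly read the definitions of $Z^{\mathrm{up}}$ and $Z^{\mathrm{down}}$ as using $f$ and $g$ evaluated at their \emph{own} running counts, which is what the paper intends (its proof treats $T_h^{\mathrm{up}}$ as a sum of independent geometric waiting times with parameters $f(i-1)$) even though the statement literally writes $f(M_{t-1})$.
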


\begin{proof}[Proof of Lemma \ref{lemma:5}]
    We will give the proof for the $\mathbb{P}(M_t \geq h) \leq \mathbb{P}(M_t^{up} \geq h)  $. The other statement follows from the same line of arguments. 

Consider the random variables $T_{h} = \min\{t:M_t = h \} $ and $T_{h}^{up} = \min \{t: M_t^{up} = h \} $. For each $h$, consider a sequence of independent uniform random variables $U_{h,1}, U_{h,2},... \sim \text{Uniform}[0,1] $. We can create a coupling of variables $(T_h, T_h^{up}) $ as follows: Define $W_h^{up} := \min \{k: U_{h,k} \leq f(h-1) \} $ and $W_h := \min \{k:U_{h,k} \leq p_{h,k}\} $, where $p_{h,k} = \mathbb{P}(Z_{T_{h-1} + k} = 1|\mathcal{H}_{T_h+k-1}) \leq f(h-1) $. Note that we have $W_h^{up} \leq W_h \text{ a.s.}$. Let $\hat T_h = \sum_{i=1 }^hW_i$ and $\hat T_h^{up} = \sum_{i=1}^h W_i^{up} $. Then we have $\hat T_h^{up} \leq \hat T_h \text{ a.s.} $. Since the $\hat T_h^{up} $ and $\hat T_h $ have the same marginal distributions as $T_h^{up} $ and $T_h $, respectively, it follows that
\begin{equation}
    \mathbb{P}(M_t \geq h )= \mathbb{P}(T_h \leq t) \leq \mathbb{P}(T_h^{up} \leq t) = \mathbb{P}(M_t^{up} \geq h).
\end{equation}
\end{proof}
An immediate corollary to Lemma 5 is the following:

\begin{lemma}\label{lem: growth}
    Suppose that for some functions $F(t) $ and $G(t) $ and any $\lambda > 0 $, we have
    \begin{equation}
        \sum_{t=1}^{\infty} \mathbb{P}(M_t^{up} > (1+ \lambda)F(t)) < \infty, \quad \sum_{t=1}^{\infty} \mathbb{P}(M_t^{down} < \frac{G(t)}{1 + \lambda}) < \infty.
    \end{equation}
    Then we have
    \begin{equation}
        \limsup_{T \rightarrow \infty} \frac{M_T}{F(T)} \leq 1, \quad \liminf_{T \rightarrow \infty} \frac{M_T}{G(T)} \geq 1 \quad \text{a.s.}.
    \end{equation}
\end{lemma}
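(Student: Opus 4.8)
The plan is to read this off from the stochastic-dominance inequalities of Lemma \ref{lemma:5} together with the first Borel--Cantelli lemma. Lemma \ref{lemma:5} tells us that $M_t^{up}$ dominates $M_t$ and $M_t$ dominates $M_t^{down}$ in the usual stochastic order, i.e. $\mathbb{P}(M_t \geq h) \leq \mathbb{P}(M_t^{up} \geq h)$ and $\mathbb{P}(M_t \geq h) \geq \mathbb{P}(M_t^{down} \geq h)$ for every threshold $h$. The idea is to feed the thresholds $h = (1+\lambda)F(t)$ and $h = G(t)/(1+\lambda)$ into these inequalities so that the hypothesized summability of the tail probabilities for $M^{up}$ and $M^{down}$ transfers directly to $M$, after which Borel--Cantelli forces the corresponding deviation events to occur only finitely often almost surely.

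For the upper bound, I would first note that since $M_t$ and $M_t^{up}$ are integer-valued, the dominance inequality also holds with strict thresholds, giving $\mathbb{P}(M_t > (1+\lambda)F(t)) \leq \mathbb{P}(M_t^{up} > (1+\lambda)F(t))$. Summing over $t$ and using the first hypothesis yields $\sum_t \mathbb{P}(M_t > (1+\lambda)F(t)) < \infty$, so by the first Borel--Cantelli lemma the event $\{M_t > (1+\lambda)F(t)\}$ occurs only finitely often almost surely. Hence on a probability-one set we eventually have $M_t \leq (1+\lambda)F(t)$, which gives $\limsup_{T} M_T/F(T) \leq 1+\lambda$ almost surely.

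The lower bound is symmetric: rewriting the dominance as $\mathbb{P}(M_t < h) \leq \mathbb{P}(M_t^{down} < h)$ and taking $h = G(t)/(1+\lambda)$, the second summability hypothesis and Borel--Cantelli give that $M_t \geq G(t)/(1+\lambda)$ eventually, hence $\liminf_T M_T/G(T) \geq 1/(1+\lambda)$ almost surely. To finish, I would apply both bounds along a sequence $\lambda_n \downarrow 0$ and intersect the resulting countably many almost-sure events; on this intersection $\limsup_T M_T/F(T) \leq 1$ and $\liminf_T M_T/G(T) \geq 1$ hold simultaneously. The only point requiring any care is the bookkeeping in this last step---passing from the ``for each fixed $\lambda$'' statements to the sharp constant $1$ via a countable intersection rather than a single uncountable one---but this is routine; the substantive content is entirely carried by the stochastic dominance established in Lemma \ref{lemma:5}.
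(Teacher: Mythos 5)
Your proof is correct and is exactly the argument the paper intends: the paper offers no written proof, presenting the lemma as an ``immediate corollary'' of Lemma~\ref{lemma:5}, and your route (transferring the tail bounds via the stochastic dominance of Lemma~\ref{lemma:5}, applying the first Borel--Cantelli lemma, and then sharpening to the constant $1$ by intersecting over a countable sequence $\lambda_n \downarrow 0$) is the standard way to fill in that ``immediate'' step.
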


\textbf{Proof of Theorem \ref{thm: pull rate}}

We denote by $N_{a, t_1:t_2} $ the number of pulls from arm $a $ between times $t_1 $ and $t_2 $: $N_{a, t_1:t_2} = \sum_{t=t_1}^{t_2} \mathbbm{1}_{A_t=a} $.

\textbf{Lower Bound on the Rate}: We will start by showing that $\liminf_{T \rightarrow \infty} \frac{N_{a,T}}{\sqrt{T}} \geq \sqrt{\frac{4\epsilon\sigma^2}{(\mu_1-\mu_a)^2}}$ a.s..

Recall that $\hat \mu_{i,t} $ denotes the empirical mean of arm $i $ based on observations before time $t $, and the posterior over the mean of arm $i $ at time $t $ is $\mathcal{N}(\hat{\mu}_{i,t}, \sigma^2/N_{i,t}) $. Hence, letting $X_{i,t} \sim \mathcal{N}(\hat{\mu}_{i,t}, \sigma^2/N_{i,t}) $, the probability assigned by $\pi_{\text{ref}} $ to arm $i $ at time $t $ is $\pi_{i,t} = \mathbb{P}(X_{i,t} = \max_j X_{j,t}) $. 

Let $\beta > 0 $, and let $\sigma_{1,t} = \sigma^2 / N_{1,t} $. Consider the event
$$B_t = \{X_{a,t} > \hat \mu_{1,t} + \beta \sigma_{1,t}, X_{j,t} < \hat \mu_{1,t} + \beta \sigma_1 \text{ for } j \neq a\} .$$
We have $\pi_{a,t} \geq \mathbb{P}(B_t) $. Due to independence, we have
\begin{equation}\label{eq: lowerbound}
     \pi_{a,t} \geq \mathbb{P}(X_{a,t} > \hat\mu_{1,t}+\beta \sigma_{1,t})\mathbb{P}(X_{1,t} < \hat \mu_{1,t} + \beta\sigma_{1,t}) \prod_{j \notin\{1, a \} } \mathbb{P}(X_{j,t} < \hat\mu_{1,t} + \beta\sigma_{1,t}). 
\end{equation}

Note that $\mathbb{P}(X_{1,t} < \hat{\mu}_{1,t} + \beta \sigma_{1,t} ) = \Phi(\beta)$ is a constant, where $\Phi(\cdot) $ is the cumulative density function of the standard normal distribution. Also note that, since each arm is pulled infinitely often, $\hat{\mu}_{i,t} \rightarrow \mu_i $ for all $i $ almost surely, hence the term
\begin{equation}
    c_t := \prod_{j \notin \{1,a\} } \mathbb{P}(X_{j,t} < \hat\mu_{1,t} + \beta \sigma_{1,t}) \rightarrow 1
\end{equation}
almost surely. 

Fix $\delta > 0 $ such that $\mu_a + \delta < \mu_1-\delta $, $c \in (0,1)$, and $n_a, n_1 \in \mathbb{N} $, where $n_a $ and $n_1 $ will be chosen later. Define the event
$$
E_t = \left\{ N_{a,t} > n_a, N_{1,t} > n_1, c_t > c, |\hat{\mu}_{j,t} -\mu_j| < \delta \text{ for } j \in \{1,a\}  \right\}
$$
and let $\tau := \inf\{t: E_{t'} \text{ holds } \forall t'>t \}$. $\tau $ is almost surely finite due to strong law of large numbers. 

We have $N_{a,T} \geq   N_{a, \tau: T} $. Since $E_t $ holds for $t \geq \tau $, using equation \eqref{eq: lowerbound}, we have 
\begin{equation}
    \pi_{a,t} \geq c\Phi(\beta) \mathbb{P}(X_{a,t} > \hat{\mu}_{1,t} + \beta \sigma_1)
\end{equation}
Using the lower bound in Lemma \ref{lem: gauss_tail} to lower bound $\mathbb{P}(X_{a,t} > \hat \mu_{1,t} + \beta \sigma_{1,t}) $, and noting that on $E_t $ we have $|\hat \mu_{1,t} - \hat \mu_{a,t}| \leq \mu_1-\mu_a + 2\delta $, yields
\begin{equation}
    \pi_{a,t} \geq \exp(-\frac{N_{a,t}( \mu_{1}-  \mu_{a} + 2 \delta +  \beta \sigma_{1,t})^2}{2\sigma^2} - \gamma(c, \beta, \delta) \log(N_{a,t}) ) \text{ for } t \geq \tau,
\end{equation}
where $\gamma(c, \beta, \delta) > 0 $ is a constant whose value is determined by $c, \beta, \delta $. For a given $\delta_a, \delta_1 > 0 $, we can choose $n_a $ such that $\frac{\log n_a}{n_a} < \delta_a $ and $\beta \sigma^2 /n_1 < \delta_1 $. It follows that, for $t \geq \tau $, we have
\begin{equation}
    \pi_{a,t} \geq \exp\left(-N_{a,t} \left( \frac{(\mu_1-\mu_a + \delta')^2}{2\sigma^2} + \delta_a \gamma(c, \beta, \delta) \right) \right)
\end{equation}
where $\delta' = \delta_1 + 2\delta $.

We can choose $n_a $ large enough so that we can use Lemma 1 to give a lower bound on the boosted probability. In this case, when the agent boosts arm $a $, we have
\begin{equation}
    \pi_{a,t}^b \geq \frac{\epsilon }{N_{a,t} \left( \frac{(\mu_1-\mu_a + \delta')^2}{2 \sigma^2} + \delta_a\gamma(c, \beta,\delta) \right)  + \log\epsilon}
\end{equation}

Using Lemma \ref{lem: bernoulli} and \ref{lem: growth}, it follows that, for any $\lambda > 1 $ the pulls from arm $ a$ after time $\tau $, denoted $N_{a,\tau:T} $ will fall below 
\begin{equation}
    \sqrt{\frac{\epsilon(T-\tau)/\lambda}{ \frac{(\mu_1-\mu_a + \delta')^2}{2 \sigma^2} + \delta_a\gamma(c, \beta,\delta)}}
\end{equation}
for finitely many values of $T $. As this is true for any $\delta' > 0, \delta_a > 0 $, and $\tau $ is almost surely finite for all $\delta' >0, \delta > 0 $, it follows that 
\begin{equation}
    \liminf_{T \rightarrow \infty} \frac{N_{a,T}}{\sqrt{T}} \geq \sqrt{\frac{4 \epsilon \sigma^2}{(\mu_1-\mu_a)^2}} \text{ a.s..}
\end{equation}

\textbf{Upper bound on the Rate:} We will show that $\limsup_{T \rightarrow \infty} \frac{N_{a,T}}{\sqrt{T}} \leq \sqrt{\frac{4 \epsilon \sigma^2}{(\mu_1-\mu_a)^2}} \text{ a.s.} $.

For an upper bound, we first need to show that the quantity $\frac{N_{1,T}}{N_{a,T}} $ diverges almost surely, given that both arm $1 $ and arm  $a $ are pulled infinitely often. 

\begin{lemma} \label{lemma:7}
    On the event that each arm is pulled infinite number of times, we have
    \begin{equation}
        \lim_{t \rightarrow\infty} \frac{N_{1,t}}{N_{a,t}} = \infty \quad \text{a.s.}.
    \end{equation}
\end{lemma}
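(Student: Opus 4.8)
The plan is to establish the divergence by showing that the best public arm is pulled a linear number of times while arm $a$ is pulled only sublinearly. Concretely, I would prove that there is a constant $c>0$ with $\liminf_{t\to\infty} N_{1,t}/t \geq c$ almost surely, and simultaneously that $N_{a,t}/t \to 0$ almost surely. These two facts suffice: for any $\delta>0$, eventually $N_{a,t}\leq \delta t$ while $N_{1,t}\geq ct/2$, so $N_{1,t}/N_{a,t}\geq c/(2\delta)$, and letting $\delta\to 0$ yields the claim.

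For the linear lower bound on $N_{1,t}$, I would argue that the conditional probability of pulling arm $1$ is eventually bounded below by a constant. Each suboptimal arm $j\neq 1$ has reference probability $\pi_{j,t}=\exp(-\Theta(N_{j,t}))\to 0$, and since there are finitely many arms, $\pi_{1,t}=1-\sum_{j\neq 1}\pi_{j,t}\to 1$. Boosting arm $a$ only redistributes mass: by the reduction in the proof of Lemma \ref{lem:kl_boost}, the boosted distribution assigns arm $a$ probability $q^*(\pi_{a,t})\to 0$ and leaves the conditional law over the remaining arms proportional to $\pi_{\text{ref}}$. Hence the boosted probability of arm $1$ equals $(1-q^*(\pi_{a,t}))\,\pi_{1,t}/(1-\pi_{a,t})\to 1$, and in particular exceeds $1/2$ for all $t$ past some almost surely finite random time. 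Applying a martingale strong law to $N_{1,t}-\sum_{s<t}\pi^b_{1,s}$, whose increments are bounded by $1$, then gives $\liminf_t N_{1,t}/t\geq 1/2$ almost surely.

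For the sublinear upper bound on $N_{a,t}$, I would use the hypothesis that arm $a$ is pulled infinitely often, so $N_{a,t}\to\infty$ and thus $\pi_{a,t}\to 0$; by Lemma \ref{lem:kl_boost} the boosted probability $\pi^b_{a,t}=q^*(\pi_{a,t})\to 0$ as well. Decomposing $N_{a,t}=\sum_{s<t}\pi^b_{a,s}+\bigl(N_{a,t}-\sum_{s<t}\pi^b_{a,s}\bigr)$, the first term is $o(t)$ by Cesàro averaging of a sequence tending to $0$, and the second (martingale) term is $o(t)$ by the strong law for bounded martingale differences. Hence $N_{a,t}/t\to 0$ almost surely, which combined with the previous step proves the lemma.

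I expect the main obstacle to be handling the history dependence rigorously: the per-step probabilities $\pi^b_{1,s}$ and $\pi^b_{a,s}$ depend on the entire past, so the counts are not sums of independent variables. I would resolve this with the martingale strong law of large numbers, or alternatively an Azuma--Borel--Cantelli argument in the style of Lemma \ref{lem: growth}, to transfer control from the conditional means to the realized counts. A secondary point needing care is justifying $\pi_{1,t}\to 1$ despite the $o(N)$ fluctuations in the exponents of the suboptimal arms' probabilities; since those fluctuations only affect lower-order terms and each $\pi_{j,t}\to 0$ almost surely on the event considered, the finite sum of suboptimal reference probabilities still vanishes.
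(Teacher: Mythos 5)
Your proposal is correct, and it reaches the conclusion by a genuinely different route for the main step. Both arguments begin from the same observations (on the event of infinite pulls, $\pi_{a,t}\to 0$, hence by Lemma~\ref{lem:kl_boost} the boosted probability $\pi^b_{a,t}\to 0$ and $\pi^b_{1,t}\to 1$), but they diverge afterward. The paper argues by contradiction: it asserts that $N_{1,t}$ grows linearly, supposes $N_{a,t}$ also grows linearly, deduces $\pi^b_{a,t}\leq \epsilon/\Theta(N_{a,t})$, and then invokes the $\sqrt{T}$-growth machinery (Lemmas~\ref{lem: bernoulli} and~\ref{lem: growth}) to conclude $N_{a,t}=O(\sqrt{T})$, contradicting linear growth. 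You instead transfer the convergence of the conditional probabilities directly to the counts via Ces\`aro averaging plus a martingale strong law for the compensated counts $N_{a,t}-\sum_{s<t}\pi^b_{a,s}$, obtaining $N_{a,t}=o(t)$ and $N_{1,t}/t\to 1$ without ever needing the precise $\sqrt{T}$ rate. Your route is more elementary and self-contained (it only uses that the probabilities vanish, not how fast), and it cleanly yields $\limsup_t N_{a,t}/t=0$ rather than just ruling out a linear lower bound, which is what the limit of the ratio actually requires; the paper's contradiction argument gets a stronger quantitative byproduct but leans on heavier lemmas. Two small points to tighten: your explicit formula $\pi^b_{1,t}=(1-q^*(\pi_{a,t}))\,\pi_{1,t}/(1-\pi_{a,t})$ presumes arm $a$ is the boosted arm at every step (true in the setting of Theorem~\ref{thm: pull rate}, but the cruder bound $\pi^b_{1,t}\geq 1-\sum_{j\neq 1}q^*(\pi_{j,t})$ covers arbitrary boosting choices); and you should state explicitly that $q^*(p)\to 0$ as $p\to 0$, which follows from the asymptotic $q^*(p)\sim \epsilon/\log(\epsilon/p)$ in Lemma~\ref{lem:kl_boost}.
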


\begin{proof}[Proof of Lemma \ref{lemma:7}]
   Since each arm is pulled infinite number of times, we will have $\pi_{1,t} \rightarrow 1 $, and for all $i \neq 1 $ $\pi_{i,t} \rightarrow 0 $ almost surely. Let $\pi_t^b $ denote the vector of pulling probabilities used by the agent at time $t $.  Note that $\pi_{i,t}^b \rightarrow 0$ and $\pi_{1,t}^b \rightarrow1 $ almost surely. Fix a $c > 1 $. Then for some almost surely finite time $\tau_c $, we will have $\pi_{1,t}^b > c\pi_{i,t}^b $ for $t \geq \tau_c $. It follows that $N_{1,t} - N_{i,t} \rightarrow \infty $ almost surely for all $ i \neq 1$. 

As $N_{1,t} $ has to grow at a linear rate, suppose that both $N_{1,t} $ and $N_{a,t} $ grow at a linear rate, i.e. there is $c_1, c_1' $ and $c_a, c_a' $ such that $c_1't \geq N_{1,t} \geq c_1t $ and $c_a't \geq  N_{a,t} \geq c_at $ for infinitely many $t $, i.e. for $t > t_0 $ for some $t_0 $. Then we have, for all $t > t_0 $, $N_{1,t} \geq N_{a,t}c_1/c_a' $, and consequently:
\begin{align}
    \pi_{a,t} &\leq \exp(- \frac{N_{1,t}N_{a,t}(\hat\mu_{1,t} - \hat \mu_{a,t})^2}{2\sigma^2(N_{1,t} + N_{a,t})}) \\
    & \leq \exp(- \frac{cN_{a,t}(\hat \mu_{1,t} - \hat \mu_{a,t})^2}{2 \sigma^2(c + 1)}),
\end{align}
where $c = c_1 / c_a' $. This in turn yields
\begin{equation}
    \pi_{a,t}^b \leq \frac{\epsilon}{\Theta(N_{a,t})},
\end{equation}
which implies, using Lemma \ref{lem: bernoulli} and Lemma \ref{lem: growth}, that the pulls from arm $a $ can grow at most at a $\Theta(\sqrt{T}) $ rate, which yields a contradiction.   
\end{proof}

Define the event $E_t = \{N_{1,t} / N_{a,t} > c, |\hat\mu_{i,t}-\mu_i| < \delta \text{ for } i \in \{1,a \} \} $, where $\delta > 0 $ is such that $\mu_a + \delta < \mu_1 - \delta $, and let $\tau = \min \{t: E_{t'} \text{ holds for all } t' \geq t \} $. We have $\tau < \infty $ almost surely, hence $\lim_{T \rightarrow \infty} \frac{N_{a, \tau}}{T} = 0$.

To upper bound $N_{a, \tau:T} $, we will first upper bound $\pi_{a,t} = \mathbb{P}(A_t=a| \mathcal{H}_{t-1}^{pub}) $ given that $E_t $ holds:
\begin{align}
    \pi_{a,t} &\leq \mathbb{P}(X_{a,t} > X_{1,t}) \\
    &\leq \exp(- \frac{N_{1,t}N_{a,t}(\hat \mu_{1,t} - \hat \mu_{a,t})^2}{2\sigma^2(N_{1,t} + N_{a,t})}) \\
    &\leq \exp(- \frac{cN_{a,t}(\mu_1-\mu_a-2\delta)^2}{2\sigma^2(c+1)}).
\end{align}
Using Lemma \ref{lem:kl_boost}, when $N_{a,t} $ is large enough the agent can increase the upper bound up to
\begin{equation}
    \pi_{a,t}^b \leq \frac{\gamma(N_{a,t}) \epsilon}{\frac{c(\mu_1-\mu_a-2\delta)^2N_{a,t}}{2\sigma^2(c+1)} + \log \epsilon}
\end{equation}
where $\gamma(N_{a,t}) \geq 1$ and  $\gamma(N_{a,t}) \rightarrow 1 $ as $N_{a,t} \rightarrow \infty $. It follows that, using lemma \ref{lem: bernoulli} and lemma \ref{lem: growth}, for $T > \tau $, for any $\lambda > 1 $, the pulls from arm $a $ after time $\tau $, denoted $N_{a, \tau:T} $, can exceed  
\begin{equation}
    \sqrt{\frac{4\epsilon \gamma \sigma^2(c + 1)T\lambda}{c(\mu_1-\mu_a-2\delta)^2}}.
\end{equation}
for only finitely many values of $T $. Since this is true for any $\gamma > 1, c > 1, \delta > 0 $,  it follows that
\begin{equation}
    \limsup_{T \rightarrow \infty} \frac{N_{a,T}}{\sqrt{T}} \leq \sqrt{\frac{4\epsilon\sigma^2}{(\mu_1-\mu_a)^2}} \quad \text{a.s.} .
\end{equation}

\section{Properties of the Problem \ref{eq: max_min_all}}

The analysis of Problem \ref{eq: max_min_all} follows the same lines of the proof of Theorem 5 in \citep{track}.

Recall that Problem \ref{eq: max_min_all} is
\begin{equation}
    \max_{w \in \mathcal{S}_K} \min\{\min_{a \neq 1} \frac{\sqrt{w_1w_a}\Delta_a^2}{\sqrt{w_1}d_a+\sqrt{w_a}d_1}, \frac{\sqrt{w_1}\Delta^2}{d_1}\}.
\end{equation}
To see that a solution exists, note that the minimum of continuous functions is continuous, and continuous functions attain their maxima on compact sets. 

Without loss of generality, we assume that $\mu_1^{priv} > \mu_2^{priv} \geq ....\geq \mu_K^{priv} $.

Let $w^* $ be an optimizer, and let $\gamma^* $ be the respective objective value. Define $x_a^* = w_a^*/w_1^*$. We have 
\begin{equation}
    w_1^* = \frac{1}{1 + \sum_{a=2}^Kx_a^*}, w_a^* = \frac{x_a^*}{1 +\sum_{a=2}^K x_a^*}
\end{equation}

Define the function $g_a(x) = \frac{\sqrt{x_a}\Delta_a^2}{d_a + \sqrt{x_a}d_1} $. $g_a(x) $ is strictly increasing as $g'_a(x) > 0 $. Noting that $g_a(0) = 0 $ and $g_a(x) \rightarrow \Delta_a^2/d_1 $ as $x \rightarrow \infty $, we can define the inverse function of $g_a $ as $x_a(y) =g_a^{-1}(y) $ on the interval $[0, \Delta_a^2/d_1) $.

The vector $(x_2^*, ...,x_K^*)$ is a solution to 
\begin{equation}
    \max_{(x_2,...,x_k)\in R^{K-1} } \frac{1}{\sqrt{1 + \sum_ax_a}} \min \{\min_{a\neq 1}g_a(x_a), \Delta^2/d_1 \}.
\end{equation}

First, we will show that $g_a(x_a^*) $ must all be equal, which is a restatement of the information balance property. 

Note that at the solution $x^* $, we have $\Delta^2/d_1 \geq g_a(x_a^*) $ for all $a $. Suppose otherwise, i.e. for some $a $, $g_a(x^*_a) > \Delta^2/d_1 $. Then it is possible to increase the objective value by slightly decreasing $x_a^* $, which yields a contradiction. 

Now we will show that $g_a(x_a^*) = \min_{i \neq1} g_i(x_i^*)$ for all $a =2,....K $. Consider the set $A=\{a: g_a(x_a^*)  > \min_{i\neq1} g_i(x_i^*)\} $. Suppose $A $ is not empty. Then it is possible to increase the objective value by slightly decreasing the $x_a $ values for $a \in A $, which results in a contradiction.  

Thus there exists a $y^* $ such that $g_a(x_a^*) = y^* $ for all $a=2,...K $ and $\Delta^2/d_1 \geq y^* $. Noting that $ x_a^* = g_a^{-1}(y^*) = x_a(y^*)$, we have 
\begin{equation}
    y^* \in \arg \max_{y \in [0, \Delta_2^2/d_1)} \frac{1}{\sqrt{1 + \sum_ax_a(y)}}\min\{y, \Delta^2/d_1 \} .
\end{equation}

Define $G(y) = \frac{y}{\sqrt{1+ \sum_ax_a(y)}}$. Note that the optimal solution $y^* $ satisfies $y^* \leq \Delta^2 /d_1 $. Equivalently, we can write
\begin{equation} \label{eq: maxG}
    y^* \in \arg \max_{y \in [0, \min\{\Delta^2,\Delta_2^2\}/d_1]} G(y).
\end{equation}

The function $G(y) $ is differentiable. $G'(y) = 0 $ yields
\begin{align}
    1+\sum_a x_a(y) = y\sum_ax_a'(y) /2.
\end{align}
Let $F(y) = 1 + \sum_a x_a(y) - y/2 \sum_{a}x_a'(y) $. Then we have $F(y^*)=0 $. Consider the derivative
\begin{equation}
    F'(y) = \frac{1}{2}\sum_a(x_a'(y)-yx_a''(y)) < 0.
\end{equation}
We have $F(0) = 1 $, and $F(y) \rightarrow -\infty  $ as $ y \rightarrow \Delta_2^2/d_1$. Consequently, the equation $F(y) = 0 $ has a unique solution, which is given by $y^* $. There are 2 possible cases:

\textbf{Case 1: $y^* \leq \Delta^2/d_1 $}. 

In that case, $y^* $ is feasible for the problem \eqref{eq: maxG}. Intuitively, in this case, the problem is equivalent to a $K-1 $ best arm identification problem, as the exploration performed for identifying the best among the $K-1 $ suboptimal public arms is also enough to decide whether $\mu_1 > \mu_{i^*} $.

Noting that $G(0) = 0 $ and $G(y) \rightarrow 0$ as $y \rightarrow \Delta_2^2/d_1 $,, the local optimum obtained by solving $F(y) = 0 $ is indeed a global optimum.

\textbf{Case 2. $y^* > \Delta^2/d_1 $}

In that case, $y^* $ is not feasible for problem \eqref{eq: maxG}. The optimal solution to \eqref{eq: maxG} is given by $ \Delta^2/d_1 $. Intuitively, in this case, the mean of arm $i^* $ is very close to that of arm 1 in terms of private rewards so that the exploration performed for the $K-1 $ best arm identification problem is not enough to declare that $\mu_1^{\text{priv}} > \mu_{i^*}^{priv} $.

\section{Implementation Details of Algorithm \ref{alg:ttts-bai}}

Algorithm 1 requires, at each time step, determining a leader arm and a challenger arm. This step is common among all top-two algorithms. In our implementations, we determined the top-two candidates by sampling from the private posterior $\Pi_t^{priv} $, but computationally simpler alternatives, such as the Top-Two Transportation Cost \citep{shang2020fixed}, are also possible. 

Different from the usual top-two algorithms, Algorithm \ref{alg:ttts-bai} requires two additional steps. The first is computing the reference probabilities assigned by $\pi_{\text{ref}} $, which in the case of Gaussian rewards, can be efficiently done with numerical integration as we discuss in Section \ref{sec: reference_probs}. The other additional step is solving the boosting problem in Equation \eqref{eq: boost}, which we discuss in Section \ref{sec: solve_boost}.

\subsection{Computing the Reference Probabilities} \label{sec: reference_probs}

Implementing Algorithm \ref{alg:ttts-bai} requires computing the pull probabilities assigned by $\pi_{\text{ref}} $ to each arm. This probability is given by 
\begin{equation} \label{eq: max_prob}
    \mathbb{P}(X_a = \max_i X_i) = \int_{- \infty }^{\infty} f_a(x) \prod_{j \neq a} \mathbb{P}(X_j \leq x) dx, 
\end{equation}
where $f_a(x) $ is the probability density function of $X_a $.

Since we are interested in Gaussian rewards, by making the change of variable $X_a  = \mu_a + \sigma_a Z$, where $Z \sim \mathcal{N}(0,1) $, the expression in equation \eqref{eq: max_prob} can be equivalently written as 
\begin{equation} \label{eq: max_prob2}
    \int_{- \infty}^{\infty} \frac{1}{\sqrt{2\pi}} e^{-x^2 /2}\prod_{j \neq a } \left[ \Phi\left(\frac{\mu_a-\mu_j+ \sigma_ax}{\sigma_j}   \right) \right ]dx,
\end{equation}
where $\Phi(\cdot ) $ is the cumulative density function of the standard normal distribution.

To approximate the integral in \eqref{eq: max_prob2}, we used the Gauss-Hermite quadrature \citep{abramowitz1965handbook} with $n $ nodes, which yields
\begin{equation}
    \mathbb{P}(X_a = \max_iX_i) \approx \frac{1}{\sqrt{\pi}} \sum_{q=1}^n w_q \prod_{j \neq a } \left[ \Phi\left(\frac{\mu_a-\mu_j+ \sigma_a\sqrt{2} x_q}{\sigma_j}   \right) \right ],
\end{equation}
where $x_q $ are the roots of the Hermite polynomial of degree $n $, and $w_q $ are the corresponding weights.

In our experiments, we used 32 nodes to compute the approximation.

\subsection{Solving the Boosting Problem}\label{sec: solve_boost}

\begin{figure}[h] \label{fig: boost}
    \centering
\includegraphics[width=0.45\linewidth]{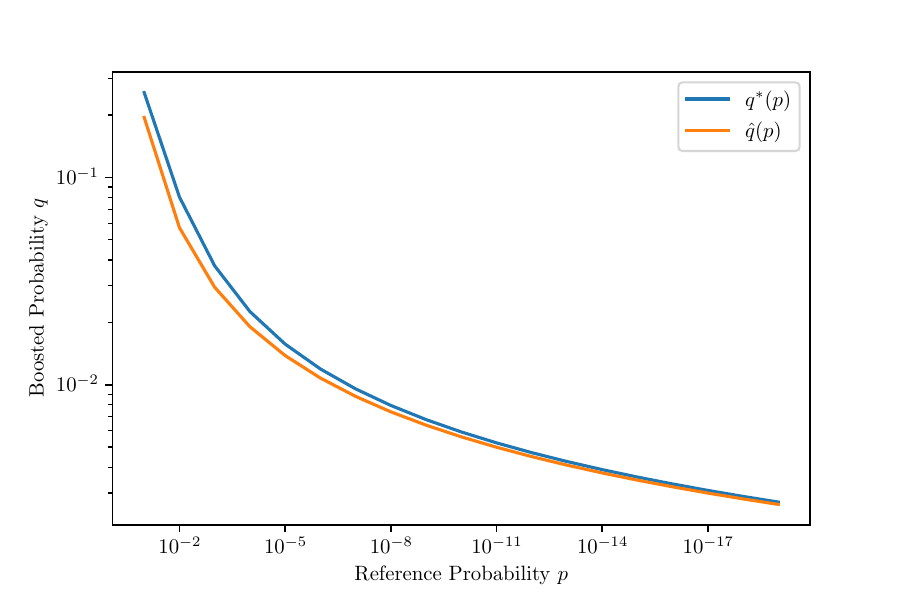}
    \caption{Comparison of $q^*(p) $ and $\hat q(p) $ for $\epsilon=0.1 $.}
    \label{fig:placeholder}
\end{figure}

Algorithm \ref{alg:ttts-bai} requires solving a KL-constrained maximization problem at each time step, which is given in \eqref{eq: boost}. As mentioned in the proof of Lemma \ref{lem: bernoulli}, this problem can be reduced to the following single-parameter problem:
\begin{align}
    q^*(p) = \max_{q \in (0,1]} \quad   &q \\
    s.t. \quad &d_{KL}(Ber(q), Ber(p)) \leq \epsilon,    
\end{align}
and can be solved efficiently with bisection. 

To further speed up our experiments, we instead approximated $q^*(p) $ using the following analytical approximation, which is a global underestimator of $q^*(p) $, and becomes exact as $p \rightarrow 0 $:
\begin{equation}
    \hat q(p) = \max \left\{\max \left\{ \frac{ \epsilon}{W(\epsilon/p)}, p \right\}, p + \sqrt{\epsilon p (1-p) }  \right\},
\end{equation}
where $W(\cdot) $ is the Lambert W function.

\end{document}


\runningtitle{I use this title instead because the last one was very long}

\onecolumn
\aistatstitle{Instructions for Paper Submissions to AISTATS 2026: \\
Supplementary Materials}

\section{FORMATTING INSTRUCTIONS}

To prepare a supplementary pdf file, we ask the authors to use \texttt{aistats2026.sty} as a style file and to follow the same formatting instructions as in the main paper.
The only difference is that the supplementary material must be in a \emph{single-column} format.
You can use \texttt{supplement.tex} in our starter pack as a starting point, or append the supplementary content to the main paper and split the final PDF into two separate files.

Note that reviewers are under no obligation to examine your supplementary material.

\section{MISSING PROOFS}

The supplementary materials may contain detailed proofs of the results that are missing in the main paper.

\subsection{Proof of Lemma 3}

\textit{In this section, we present the detailed proof of Lemma 3 and then [ ... ]}

\section{ADDITIONAL EXPERIMENTS}

If you have additional experimental results, you may include them in the supplementary materials.

\subsection{Effect of the Regularization Parameter}

\textit{Our algorithm depends on the regularization parameter $\lambda$. Figure 1 below illustrates the effect of this parameter on the performance of our algorithm. As we can see, [ ... ]}

\vfill